\documentclass[10pt]{article}
\usepackage {amsthm}
\usepackage[hmargin=1.4in]{geometry}
\usepackage{natbib}

\newcommand{\emcite}{\citet}
\renewcommand{\cite}{\citep}
\newcommand{\mybibstyle}{plainnat}

\newtheorem {lemma} {Lemma}
\newtheorem {theorem}{Theorem}
\newtheorem {corollary}{Corollary}

\title{Adaptive Bound Optimization\\
 for Online Convex Optimization}

\author{H. Brendan McMahan\\ 
Google, Inc. \\
\texttt{\small mcmahan@google.com}
\and
Matthew Streeter \\
Google, Inc. \\
\texttt{\small mstreeter@google.com}
}

\usepackage{amsfonts}
\usepackage{amsmath}
\usepackage[psamsfonts]{amssymb}
\usepackage{latexsym}
\usepackage{color}
\usepackage{graphics}
\usepackage{enumerate}
\usepackage{amstext}
\usepackage{url}
\usepackage{epsfig}
\usepackage{times}
\usepackage{xspace}
\usepackage{subfigure}
\usepackage{caption}
\usepackage{algorithm}
\usepackage{algorithmic}

\def \argmin {\mathop{\rm arg\,min}}

\def \reals {\mathbb{R}}

\newcommand {\norm}[1]{\ensuremath{\| #1 \|}}

\newcommand {\paren}[1]{\ensuremath{\left(#1\right)}}
\newcommand {\set}[1]{\ensuremath{\left\{#1\right\}}}

\newcommand{\BO}{\mathcal{O}}

\newcommand{\R}{\ensuremath{\mathbb{R}}}
\newcommand{\Regret}{\mathcal{R}}
\newcommand{\grad}{\triangledown}
\newcommand{\Proj}{P}
\newcommand{\abs}[1]{|#1|}

\newcommand{\mleq}{\preceq}
\newcommand{\mgeq}{\succeq}
\newcommand{\mlt}{\prec}
\newcommand{\mgt}{\succ}
\newcommand{\Snpp}{S^n_{++}}

\newcommand{\Qtt}{Q_{1:t}}

\newcommand{\gtt}{g_{1:t}}

\newcommand{\hx}{\hat{x}}
\newcommand{\rtt}{r_{1:t}}
\newcommand{\rt}{r_t}
\newcommand{\uti}{u_{t+1}}
\newcommand{\ut}{u_t}
\newcommand{\xs}{\mathring{x}} %
\newcommand{\xti}{x_{t+1}}
\newcommand{\xt}{x_t}

\newcommand{\us}{\mathring{u}}

\newcommand{\Fs}{\mathcal{F}}
\newcommand{\PFA}{\Proj_{\Fs,A}}

\newcommand{\Fsym}{\Fs_{\text{sym}}}

\newcommand{\uu}{\hat{u}}
\newcommand{\xx}{\hat{x}}

\newcommand{\hy}{\hat{y}}
\newcommand{\hg}{\hat{g}}

\renewcommand{\Regret}{\text{Regret}}

\newcommand{\vs}[1]{\vec{#1}}

\newcommand{\lbar}{\bar{\lambda}}
\newcommand{\abar}{\bar{\alpha}}

\newcommand{\inst}{\mathcal{I}}
\newcommand{\hinst}{\hat{\mathcal{I}}}
\newcommand{\hFs}{\hat{\Fs}}

\newcommand{\Snp}{S^n_{+}}
\renewcommand{\Snpp}{S^n_{++}}

\newcommand{\Qfs}{\mathcal{Q}}

\newcommand{\constQ}{\Qfs_{\text{const}}}
\newcommand{\diagQ}{\Qfs_{\text{diag}}}
\newcommand{\fullQ}{\Qfs_{\text{full}}}

\newcommand{\FTPRL}{FTPRL\xspace}
\newcommand{\FTPRLdiag}{\mbox{FTPRL-Diag}\xspace}
\newcommand{\FTPRLscale}{\mbox{FTPRL-Scale}\xspace}
\newcommand{\E}[1]{\mathbb{E}\left[#1\right]}
\newcommand{\tp}{\top}

\def \oset {\paren}

\newcommand{\shortlong}[2]{#2}   %

\begin{document}

\maketitle

\begin{abstract}
We introduce a new online convex optimization algorithm that
adaptively chooses its regularization function based on the loss
functions observed so far.  This is in contrast to previous algorithms
that use a fixed regularization function such as $L_2$-squared, and
modify it only via a single time-dependent parameter.  Our algorithm's
regret bounds are worst-case optimal, and for certain realistic
classes of loss functions they are much better than existing bounds.
These bounds are problem-dependent, which means they can exploit the
structure of the actual problem instance.  Critically, however, our
algorithm does not need to know this structure in advance.  Rather, we
prove competitive guarantees that show the algorithm provides a bound
within a constant factor of the best possible bound (of a certain
functional form) in hindsight.
\end{abstract}

\section{Introduction}
We consider online convex optimization in the full information
feedback setting.  A closed, bounded convex feasible set $\Fs
\subseteq \reals^n$ is given as input, and on each round $t=1, \dots,
T$, we must pick a point $\xt \in \Fs$.  A convex loss function $f_t$
is then revealed, and we incur loss $f_t(\xt)$.  Our regret at
the end of $T$ rounds is
\begin{equation}
\Regret \equiv \sum_{t=1}^T f_t (\xt) - \min_{x \in \Fs} \sum_{t=1}^T f_t(x) .
\end{equation}

Existing algorithms for online convex optimization are worst-case
optimal in terms of certain fundamental quantities.  In particular,
online gradient descent attains a bound of $\BO(DM\sqrt{T})$ where $D$
is the $L_2$ diameter of the feasible set and $M$ is a bound on
$L_2$-norm of the gradients of the loss functions.  This bound is
tight in the worst case, in that it is possible to construct problems
where this much regret is inevitable.  However, this does not mean
that an algorithm that achieves this bound is optimal in a practical
sense, as on easy problem instances such an algorithm is still allowed
to incur the worst-case regret.  In particular, although this bound is
minimax optimal when the feasible set is a hypersphere \cite
{abernethy08}, we will see that much better algorithms exist when the
feasible set is the hypercube.

To improve over the existing worst-case guarantees,
we introduce additional parameters that capture more of the
problem's structure.
These parameters depend on the loss functions, which are not
known in advance.
To address this, we
first construct functional upper bounds on regret $B_R(\theta_1,
\dots, \theta_T; f_1, \dots, f_T)$ that depend on both
(properties of) the loss functions $f_t$ and algorithm parameters $\theta_t$.
We then give algorithms for choosing the
parameters $\theta_t$ adaptively (based only on $f_1, f_2, \ldots, f_{t-1}$)
and prove that these adaptive schemes provide a
regret bound that is only a constant factor worse than the best
possible regret bound of the form $B_R$.  Formally, if for all
possible function sequences $f_1, \dots f_T$ we have
\[
B_R(\theta_1, \dots, \theta_T; f_1, \dots, f_T) \leq \kappa 
\inf_{\theta_1', \dots, \theta_T' \in \Theta^T} 
B_R(\theta_1', \dots, \theta_T'; f_1, \dots, f_T)
\]
for the adaptively-selected $\theta_t$, we say the adaptive scheme is
$\kappa$-competitive for the bound optimization problem.
In
Section~\ref{sec:motivation}, we provide realistic examples where known bounds
are much worse than the problem-dependent bounds obtained by our
algorithm.

\subsection{Follow the proximally-regularized leader}
We analyze a \emph {follow the regularized leader}
(FTRL) algorithm
that adaptively selects regularization functions of the
form
\[r_t(x) = \frac 1 2 \norm{ (Q_t^{\frac 1 2}(x  - x_t)}^2_2\]
where $Q_t$ is a positive semidefinite matrix.
Our algorithm plays $x_1 = 0$ on round 1 (we assume without loss
of generality that $0 \in \Fs$), and on round $t+1$, selects the point
\begin{equation}\label{eq:ftrlx}
  \xti = \argmin_{x \in \Fs} \oset{ 
     \sum_{\tau=1}^t \big(r_\tau(x) + f_\tau(x)\big)  }.
\end{equation}
In contrast to other FTRL algorithms, such as the dual averaging
method of \emcite{xiao09dualaveraging}, we center the additional
regularization at the current feasible point $x_t$ rather than at the
origin.  Accordingly, we call this algorithm \emph{follow the
proximally-regularized leader} (\FTPRL).  This proximal centering of
additional regularization is similar in spirit to the optimization
solved by online gradient descent (and more generally, online mirror
descent, \cite{cesabianchi06plg}).  However, rather than considering
only the current gradient, our algorithm considers the sum of all
previous gradients, and so solves a global rather than local
optimization on each round.  We discuss related work in more detail in
Section~\ref{sec:related}.

The \FTPRL algorithm allows a clean analysis from first principles,
which we present in Section~\ref{sec:alg_analysis}.
The proof
techniques are rather different from those used for online gradient
descent algorithms, and will likely be of independent interest.

We write $\vs{Q_T}$ as shorthand for $(Q_1, Q_2, \dots, Q_T)$, with
$\vs{g_T}$ defined analogously.  For a convex set $\Fs$, we define
$\Fsym = \set{ x - x' \mid x, x' \in \Fs}$. Using this notation, we
can state our regret bound as
\begin{equation}\label{eq:brfunc}
\Regret 
 \leq B_R(\vs{Q_T}, \vs{g_T}) 
 \equiv \frac{1}{2} \sum_{t=1}^T \max_{\hy \in \Fsym} \oset { \hy^\tp Q_t\hy }
          + \sum_{t=1}^T g_t^\tp \Qtt^{-1}g_t
\end{equation}
where $g_t$ is a subgradient of $f_t$ at $x_t$ and $\Qtt =
\sum_{\tau=1}^t Q_\tau$.  We prove competitive ratios with respect to
this $B_R$ for several adaptive schemes for selecting the $Q_t$
matrices.  In particular, when the \FTPRLdiag scheme is run on a
hyperrectangle (a set of the form $\set{x \mid x_i
\in [a_i, b_i]} \subseteq \R^n)$, we achieve
\[ \Regret 
\leq  \sqrt{2} \inf_{\vs{Q} \in \diagQ^T} B_R(\vs{Q_T}, \vs{g_T})
\]
where $\diagQ = \set{ \text{diag}(\lambda_1, \dots, \lambda_n) \mid
\lambda_i \geq 0}$.  When the \FTPRLscale scheme is run on a
feasible set of the form $\Fs = \set{x \mid \norm{Ax}_2 \leq 1}$ for $A \in \Snpp$,
it is competitive with arbitrary positive semidefinite matrices:
\[ \Regret \leq \sqrt{2} 
\inf_{\vs{Q} \in (\Snp)^T} B_R(\vs{Q_T}, \vs{g_T})\ .
\]

Our analysis of \FTPRL reveals a fundamental connection between the
shape of the feasible set and the importance of choosing the
regularization matrices adaptively.  When the feasible set is a
hyperrectangle, \FTPRLdiag has stronger bounds than known algorithms,
except for degenerate cases where the bounds are identical.  In
contrast, when the feasible set is a hypersphere, $\set{x
\mid \norm{x}_2 \leq 1}$, the bound $B_R$ is always optimized by
choosing $Q_t = \lambda_t I$ for suitable $\lambda_t \in \R$.  The
\FTPRLscale scheme extends this result to hyperellipsoids by applying
a suitable transformation.  These results are presented in detail in
Section~\ref{sec:ratios}.

\subsection{The practical importance of adaptive regularization}
\label{sec:motivation}

In the past few years, online algorithms have emerged as
state-of-the-art techniques for solving large-scale machine learning
problems~\cite{bottou08,zhang04}.  Two canonical
examples of such large-scale learning problems are text classification
on large datasets and predicting click-through rates for ads on a
search engine.  For such problems, extremely large feature sets may be
considered, but many features only occur rarely, while few occur very
often.  Our diagonal-adaptation algorithm offers improved bounds for
problems such as these.

As an example, suppose $\Fs = [-\frac{1}{2}, \frac{1}{2}]^n$ (so
$D=\sqrt{n}$).  On each round $t$, the $i$th component of $\grad f_t(x_t)$
(henceforth $g_{t, i}$) is 1 with
probability $i^{-\alpha}$, and is 0 otherwise, for some $\alpha \in
[1, 2)$.  Such heavy-tailed distributions are common in text
classification applications, where there is a feature for each word.
In this case, gradient descent with a global learning
rate\footnote{The $\BO(DM \sqrt{T})$ bound (mentioned in the
introduction) based on a $1/\sqrt{t}$ learning rate gives $\BO(n
\sqrt{T})$ here; to get $\BO(\sqrt{nT})$ a global rate based on
$\norm{g_t^2}$ is needed, e.g., Corollary~\ref{cor:global}.}  obtains
an expected regret bound of $O(\sqrt {n T})$.  In contrast, the
algorithms presented in this paper will obtain expected regret on the order of
\[
\E { \sum_{i=1}^n \sqrt { \sum_{t=1}^T g_{t, i}^2 } }
\le \sum_{i=1}^n \sqrt { \sum_{t=1}^T \E {  g_{t, i}^2  } }
= \sum_{i=1}^n \sqrt { T i^{-\alpha}  }
= O(\sqrt {T} \cdot n^{1 - \frac \alpha 2} )
\]
using Jensen's inequality.  This bound is never worse than the
$O(\sqrt {n T})$ bound achieved by ordinary gradient descent, and can
be substantially better.  For example, in problems where a constant
fraction of examples contain a new feature, $n$ is $\Omega(T)$ and the
bound for ordinary gradient descent is vacuous.  In contrast, the
bound for our algorithm is $O(T^{\frac {3-\alpha} {2}})$, which is
sublinear for $\alpha > 1$.

This performance difference is not merely a weakness in the regret
bounds for ordinary gradient descent, but is a difference in actual
regret.  In concurrent work \cite{streeter10percoord}, we showed that
for some problem families, a per-coordinate learning rate for online
gradient descent provides asymptotically less regret than even the
best non-increasing global learning rate (chosen in hindsight, given
the observed loss functions).  This construction can be adapted to \FTPRL as:
\begin{theorem} \label {thm:bad_class}
There exists a family of online convex optimization problems,
parametrized by the number of rounds $T$, where online subgradient
descent with a non-increasing learning rate sequence (and \FTPRL with
non-decreasing coordinate-constant regularization) incurs regret at
least $\Omega(T^\frac{2}{3})$, whereas \FTPRL with appropriate
diagonal regularization matrices $Q_t$ has regret $O(\sqrt{T})$.
\end{theorem}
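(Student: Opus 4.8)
The plan is to adapt the per-coordinate lower-bound construction of \cite{streeter10percoord} so that it punishes the proximal \FTPRL{} update and not merely plain subgradient descent, and then to certify the $\BO(\sqrt T)$ upper bound for \FTPRLdiag{} directly through $B_R$. Fix $n = \BTh(T^{1/3})$, let $\Fs = [-1,1]^n$ (so $0 \in \Fs$ and $\Fsym = [-2,2]^n$), and split the $T$ rounds into a \emph{phase $1$} of length $m = T/2$ and a \emph{phase $2$} of length $T/2$, the latter partitioned into $L = \BTh(T^{1/3})$ consecutive blocks of length $\ell = \BTh(T^{2/3})$. In phase $1$ every loss is linear and supported on coordinate $1$: a symmetry-breaking first gradient $g_{1,1} = -\tfrac12$ followed by the alternating sequence $+1,-1,+1,\dots$ In block $j$ of phase $2$ the loss is $\beta\, x_{j+1}$ with a fixed sign and magnitude $\beta = \BTh(T^{-1/3})$, so its optimum is the corner $x_{j+1} = -1$ worth $-\beta\ell$; the overall offline optimum puts coordinate $1$ at $0$ and every block coordinate at $-1$. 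Phase $1$ is the classical ``follow-the-leader is unstable'' gadget: under subgradient descent with rate $\eta_t$ the iterate on coordinate $1$ chatters with amplitude $\BTh(\eta_t)$ and pays $\BTh\big(\sum_{t\le m}\eta_t\big)$, whereas under \FTPRL{} with $Q_t = \lambda_t I$ the iterate is the $\lambda$-weighted average of past plays shifted by $-\big(\sum_{\tau\le t}g_{\tau,1}\big)/\lambda_{1:t}$ --- damped, but still paying $\BTh\big(\sum_{t\le m}1/\lambda_{1:t}\big)$. Phase $2$ is the complementary ``can't reach the corner'' gadget: any algorithm whose per-round move on a freshly activated coordinate is $\BO(T^{-1/3})$ travels total distance $<1$ inside a block, so pays $\Omega(\beta\ell) = \Omega(T^{1/3})$ there and $\Omega(T^{2/3})$ over the $L$ blocks.

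For the lower bound I would run a dichotomy on the effective step at round $T/2$. The learning rate being non-increasing, every phase-$1$ rate is $\ge \eta_{T/2}$ and every phase-$2$ rate is $\le \eta_{T/2}$: so if $\eta_{T/2} \ge \BTh(T^{-1/3})$ then phase $1$ alone costs $m\,\BTh(T^{-1/3}) = \Omega(T^{2/3})$, while if $\eta_{T/2} < \BTh(T^{-1/3})$ then inside each phase-$2$ block the fresh coordinate moves total distance $\le \eta_{T/2}\,\beta\ell < 1$ and the $L$ blocks cost $\Omega(T^{2/3})$. For \FTPRL{} with non-decreasing coordinate-constant regularization $Q_t = \lambda_t I$ the same split is run instead on the accumulated regularization $\lambda_{1:T/2}$, which plays the role of an inverse step size: if $\lambda_{1:T/2}$ is small then (since $\lambda_{1:t}$ is non-decreasing) $\sum_{t\le m}1/\lambda_{1:t} \ge m/\lambda_{1:T/2} = \Omega(T^{2/3})$, and if it is large then during phase $2$ each block coordinate --- whose entire history before its block is $0$ --- stays $o(1)$-far from its optimal corner throughout the block, again costing $\Omega(T^{2/3})$. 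The \FTPRL{} iterate here is handled by its closed form on a box: the unconstrained minimizer is, coordinatewise, the $\lambda$-weighted average of the past plays shifted by $-\big(\sum_{\tau\le t}g_{\tau,i}\big)/\lambda_{1:t}$, which is then clipped to $[-1,1]$.

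For the upper bound I would exhibit, in hindsight, diagonal $Q_t = \diag(\lambda_{t,1},\dots,\lambda_{t,n}) \in \diagQ$ with $B_R(\vs{Q_T},\vs{g_T}) = \BO(\sqrt T)$ on this instance. On the cube $B_R$ decouples across coordinates, the $i$th contributing $2\,\lambda_{1:T,i} + \sum_{t}g_{t,i}^2/\lambda_{1:t,i}$; taking $\lambda_{1:t,1}\propto\sqrt t$ makes coordinate $1$ cost $\BO(\sqrt m) = \BO(\sqrt T)$, and taking $\lambda_{1:t,j+1}\propto\sqrt{\sum_\tau g_{\tau,j+1}^2} = \BTh(\beta\sqrt\ell) = \BTh(1)$ makes each block coordinate cost $\BO(1)$, so the $L = \BTh(T^{1/3})$ block coordinates add only $\BO(T^{1/3})$. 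Since \FTPRLdiag{} run on a hyperrectangle is $\sqrt2$-competitive with the best diagonal regularizer in hindsight, i.e.\ $\Regret \le \sqrt2\inf_{\vs{Q}\in\diagQ^T}B_R(\vs{Q_T},\vs{g_T})$, its regret on this instance is $\BO(\sqrt T)$. (For perspective, this cube has $L_2$-diameter $\BTh(T^{1/6})$, so the generic $\BO(DM\sqrt T)$ bound is itself $\BTh(T^{2/3})$ here, matching the lower bound for fixed regularization.)

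The step I expect to be the real obstacle is designing a single phase-$1$ sequence that defeats both dynamics simultaneously: subgradient descent reacts to the most recent gradient (so its chatter amplitude is tied to the current rate) whereas \FTPRL{} reacts to the running gradient sum (a damped follow-the-leader), and the sequence must hurt the former whenever the rate is large \emph{and} the latter whenever the regularization is small. The symmetry-breaking kick plus alternation does this, but making the $\Omega\big(\sum_{t\le m}1/\lambda_{1:t}\big)$ bound rigorous for an arbitrary non-decreasing schedule --- in particular controlling the ``gain'' rounds, on which the alternating gradient momentarily helps \FTPRL, so that they do not cancel the ``loss'' rounds --- is the delicate calculation, as is propagating the two-phase dichotomy through genuinely time-varying rates and regularizers rather than constant ones.
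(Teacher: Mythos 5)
The paper does not prove Theorem~\ref{thm:bad_class} in text; it cites a concurrent paper and only reveals the parameters of the hard family, namely $D = T^{1/6}$ and $M=1$. Your construction ($\Fs = [-1,1]^n$ with $n = \Theta(T^{1/3})$, hence $D = \Theta(T^{1/6})$; phase-1 gradients of $L_2$-norm $1$ on a single coordinate; $L = \Theta(T^{1/3})$ phase-2 blocks of length $\ell = \Theta(T^{2/3})$ with gradient magnitude $\beta = \Theta(T^{-1/3})$ on fresh coordinates) matches those parameters exactly, and your parameter choices are essentially forced: $\beta^2\ell = \Theta(1)$ is needed to make the per-coordinate bound $\BO(\sqrt T)$ via Corollary~\ref{cor:percoord}, $L\beta\ell = \Theta(T^{2/3})$ is needed for the phase-2 lower bound, and the dichotomy threshold $\eta \approx 1/(\beta\ell) = \Theta(T^{-1/3})$ must agree with the phase-1 threshold $\eta \approx T^{2/3}/m$. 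The upper-bound half of your argument is complete as written.

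On the lower bound: the ``real obstacle'' you flag is smaller than you fear, because in the unconstrained regime the paper's Equation~\eqref{eq:unconstrained_opt} gives $u_{t+1} = u_t - Q_{1:t}^{-1}g_t$, so \FTPRL{} with $Q_t = \lambda_t I$ is \emph{identically} OGD with step size $\eta_t = 1/\lambda_{1:t}$; there is only one dynamic to defeat, not two. The genuine delicacy is what you identify second: proving the phase-1 cost is $\Omega\bigl(\sum_{t\le m}\eta_t\bigr)$ for an \emph{arbitrary} non-increasing schedule with a \emph{fixed} alternating gradient sequence. An Abel-summation computation shows the phase-1 cost equals $\sum_{t \text{ even}, t \le m}\eta_t$ plus error terms of size $\BO(\eta_1)$, and one must also track the clipping when $\eta_1 > 2$ (and, for \FTPRL, the lazy rather than eager projection). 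These work out, but they do need to be done; your proposal correctly identifies them as the remaining work without carrying them out. I would also drop the word ``damped'' when describing the \FTPRL{} iterate --- absent projection it is not a smoothed version of OGD, it is OGD.
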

In fact, any online learning algorithm whose regret is
$O(M D \sqrt T)$ (where $D$ is the $L_2$ diameter of the
feasible region, and $M$ is a bound on the $L_2$ norm of the
gradients) will suffer regret $\Omega(T^\frac{2}{3})$ on this family
of problems.
Note that this does not contradict the $O(M D \sqrt T)$ upper bound on
the regret, because in this family of problems $D = T^{\frac 1 6}$
(and $M = 1$).

\subsection{Adaptive algorithms and competitive ratios}

In Section~\ref{sec:ratios}, we introduce specific schemes for
selecting the regularization matrices $Q_t$ for \FTPRL, and show that
for certain feasible sets, these algorithms provide bounds within a
constant factor of those for the best post-hoc choice of matrices,
namely
\begin{equation} \label{eq:posthocbound}
  \inf_{\vs{Q_T} \in \Qfs^T} B_R(\vs{Q_T}, \vs{g_T})
\end{equation}
where $\Qfs \subseteq \Snp$ is a set of allowed matrices; $\Snp$ is
the set of symmetric positive semidefinite $n \times n$ matrices, with
$\Snpp$ the corresponding set of symmetric positive definite matrices.
We consider three different choices for $\Qfs$: the set of
coordinate-constant matrices $\constQ = \set{\alpha I \mid \alpha \geq
0}$; the set of non-negative diagonal matrices,
\[\diagQ = \set{\text{diag}(\lambda_1, \dots, \lambda_n) \mid \lambda_i \geq 0};\]
and, the full set of positive-semidefinite matrices, $\fullQ =\Snp$.

We first consider the case where the feasible region is an $L_p$ unit
ball, namely $\Fs = \set{x \mid \norm{x}_p \leq 1}$.  For $p \in [1, 2]$,
we show that a simple
algorithm (an analogue of standard online gradient descent) that
selects matrices from $\constQ$ is $\sqrt{2}$-competitive with the
best post-hoc choice of matrices from the full set of positive
semidefinite matrices $\fullQ = \Snp$.  
This algorithm is presented in
Corollary~\ref{cor:global}, and the competitive ratio is proved in
Theorem~\ref{thm:sphere}.

In contrast to the result for $p \in [1, 2]$, we show that for $L_p$
balls with $p > 2$ a coordinate-independent choice of matrices $(Q_t
\in \constQ)$ does not in general obtain the post-hoc optimal bound
(see Section~\ref{sec:lp_posthoc}), and hence per-coordinate
adaptation can help.  The benefit of per-coordinate adaptation is most
pronounced for the $L_\infty$-ball, where the coordinates are
essentially independent.  In light of this, we develop an efficient
algorithm (\FTPRLdiag, Algorithm~\ref{alg:diag}) for adaptively
selecting $Q_t$ from $\diagQ$, which uses scaling based on the width
of $\Fs$ in the coordinate directions (Corollary~\ref{cor:percoord}).
In this corollary, we also show that this algorithm
$\sqrt{2}$-competitive with the best post-hoc choice of matrices from
$\diagQ$ when the feasible set is a hyperrectangle.

While per-coordinate adaptation does not help for the
unit $L_2$-ball, it can help when the feasible set is a hyperellipsoid.
In particular, in the case where $\Fs = \set{x
\mid \norm{A x}_2 \leq 1}$ for $A \in \Snpp$,
we show that an appropriate transformation of the problem can produce
significantly better regret bounds.
More generally, we show (see Theorem~\ref{thm:blackbox}) that if one
has a $\kappa$-competitive adaptive \FTPRL scheme for the feasible set
$\set{x \mid \norm{x} \leq 1}$ for an arbitrary norm, it can be extended to
provide a $\kappa$-competitive algorithm for feasible sets
of the form $\set{x \mid \norm{A x} \leq 1}$.  Using this result, we
can show
\FTPRLscale is $\sqrt{2}$-competitive with the
best post-hoc choice of matrices from $\Snp$ when $\Fs = \set{x \mid
\norm{A x}_2 \leq 1}$ and $A \in \Snpp$; it is $\sqrt{2}$-competitive
with $\diagQ$ when $\Fs = \set{x \mid \norm{A x}_p \leq 1}$ for $p \in
[1,2)$.

Of course, in many practical applications the feasible set may not be
so nicely characterized.  We emphasize that our algorithms and
analysis are applicable to arbitrary feasible sets, but the quality of
the bounds and competitive ratios will depend on how tightly the
feasible set can be approximated by a suitably chosen transformed norm
ball.  In Theorem~\ref{thm:arbcomp}, we show in particular that when
\FTPRLdiag is applied to an arbitrary feasible set, it provides a
competitive guarantee related to the ratio of the widths of the
smallest hyperrectangle that contains $\Fs$ to the largest
hyperrectangle contained in $\Fs$.

\subsection{Notation and technical background}
We use the notation $\gtt$ as a shorthand for $\sum_{\tau=1}^t
g_\tau$. Similarly we write $\Qtt$ for a sum of matrices $Q_t$, and
$f_{1:t}$ to denote the function $f_{1:t}(x) = \sum_{\tau = 1}^t
f_\tau(x)$.  We write $x^\tp y$ or $x \cdot y$ for the inner product
between $x,y \in \R^n$.  The $i$th entry in a vector $x$ is denoted
$x_i \in \R$; when we have a sequence of vectors $x_t \in \R^n$ indexed by time, the
$i$th entry is $x_{t,i} \in \R$.  We use $\partial f(x)$ to denote the set of
subgradients of $f$ evaluated at $x$.

Recall $A \in \Snpp$ means $\forall x\neq 0,\ x^\tp Ax > 0$.  We use
the generalized inequality $A \mgt 0$ when $A
\in \Snpp$, and similarly $A \mlt B$ when $B-A \mgt 0$, implying
$x^\tp Ax < x^\tp Bx$.  We define $A \mleq B$ analogously for
symmetric positive semidefinite matrices $\Snp$.  For $B \in
\Snp$, we write $B^{1/2}$ for the square root of $B$, the unique $X \in
\Snp$ such that  $XX = B$ (see, for example,~\emcite[A.5.2]{boyd}).
We also make use of the fact that any $A \in \Snp$ can be factored as
$A = PDP^\tp $ where $P^\tp P = I$ and $D=\text{diag}(\lambda_1,
\dots, \lambda_n)$ where $\lambda_i$ are the eigenvalues of $A$.  

Following the arguments of \emcite{zinkevich03}, for the remainder we
restrict our attention to linear functions.  Briefly, the convexity of
$f_t$ implies $f_t(x) \ge g_t^\tp (x - x_t) + f_t(x_t)$, where $g_t \in
\partial f(x_t)$.  Because this inequality is tight for $x = x_t$, it follows
that regret measured against the affine functions on the right hand
side is an upper bound on true regret.  Furthermore, regret is
unchanged if we replace this affine function with the linear function
$g_t^\tp x$.  Thus, so long as our algorithm only makes use of the
subgradients $g_t$, we may assume without loss of generality that the
loss functions are linear.

Taking into account this reduction and the functional form of the
$r_t$, the update of \FTPRL is
\begin{equation}\label{eq:linearquadopt}
\xti = \argmin_{x \in \Fs} \oset {\frac{1}{2} 
\sum_{\tau=1}^t (x - x_\tau)^\tp Q_\tau (x - x_\tau) + \gtt \cdot x}.
\end{equation}

\section {Analysis of \FTPRL}\label{sec:alg_analysis}

In this section, we prove the following bound on the regret of \FTPRL
for an arbitrary sequence of regularization matrices $Q_t$.  In this
section $\norm{\cdot}$ always means the $L_2$ norm, $\norm{\cdot}_2$.

\begin{theorem}\label{thm:adaptive_norms}
Let $\Fs \subseteq \R^n$ be a closed, bounded convex set with $0 \in
\Fs$.  Let $Q_1 \in \Snpp$, and $Q_2, \dots, Q_T \in \Snp$.  Define $r_t(x) = \frac 1 2
\norm {Q^{\frac 1 2}_t (x -
\xt)}^2_2$, and $A_t = (Q_{1:t})^{\frac 1 2}$.  Let $f_t$ be a sequence
of loss functions, with $g_t \in \partial f_t(x_t)$ a sub-gradient of
$f_t$ at $x_t$.  Then, the \FTPRL algorithm that that faces loss
functions $f$, plays $x_1 = 0$, and uses the update of
Equation~\eqref{eq:linearquadopt} thereafter, has a regret bound
\[ \Regret \leq r_{1:T}(\xs) + \sum_{t=1}^T \norm{A_t^{-1} g_t}^2\]
where
$\xs = \argmin_{x \in \Fs} f_{1:T}(x)$ is the post-hoc optimal feasible point.
\end{theorem}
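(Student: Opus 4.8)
The plan is to bound regret against the post-hoc optimal point $\xs$ by analyzing the sequence of FTRL iterates using a standard "follow-the-leader / be-the-leader" telescoping argument, but adapted to the proximal centering. Define $h_0 \equiv 0$ and, for $t \ge 1$, the cumulative regularized objective $h_t(x) = \sum_{\tau=1}^t \big(r_\tau(x) + g_\tau^\tp x\big)$, so that $\xti = \argmin_{x\in\Fs} h_t(x)$ (and $x_1 = 0 = \argmin_{x\in\Fs} h_0(x)$, consistent with the stated initialization). The first step is the classical lemma that the "leader" sequence has non-positive regret against any fixed comparator when we also count the regularizers: for any $\ys \in \Fs$,
\[
\sum_{t=1}^T \big(r_t(x_{t+1}) + g_t^\tp x_{t+1}\big) \;\le\; \sum_{t=1}^T \big(r_t(\ys) + g_t^\tp \ys\big),
\]
which follows by induction on $T$ using $h_t(\xti) \le h_t(x_{t+2})$ at each step (the "be-the-leader" inequality). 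Applying this with $\ys = \xs$ and rearranging gives
\[
\sum_{t=1}^T g_t^\tp(x_t - \xs) \;\le\; \sum_{t=1}^T r_t(\xs) + \sum_{t=1}^T g_t^\tp(x_t - x_{t+1}) - \sum_{t=1}^T r_t(x_{t+1}).
\]
Since $r_t(\xs) = r_{1:T}(\xs)$ summed, and (by the linearization reduction in the excerpt) $\Regret \le \sum_t g_t^\tp(x_t-\xs)$, it remains to control the per-round term $g_t^\tp(x_t - x_{t+1}) - r_t(x_{t+1})$ by $\norm{A_t^{-1} g_t}^2$.

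The second step is this per-round bound, which is where the proximal centering and the choice $A_t = Q_{1:t}^{1/2}$ do the real work. The point $\xt$ minimizes $h_{t-1}$ over $\Fs$, and $h_{t-1}$ is strongly convex with respect to the quadratic form $Q_{1:t-1}$ (in fact $h_{t-1}(x) - h_{t-1}(\xt) \ge \tfrac12 (x-\xt)^\tp Q_{1:t-1}(x-\xt)$ by the first-order optimality condition for a constrained minimum of a convex function on $\Fs$). The key observation is that $h_t(x) = h_{t-1}(x) + \tfrac12 (x-\xt)^\tp Q_t (x-\xt) + g_t^\tp x$, so $h_t$ is strongly convex with respect to $Q_{1:t}$ centered appropriately, and its constrained minimizer $\xti$ satisfies $\tfrac12 (x_t - \xti)^\tp Q_{1:t} (x_t - \xti) \le h_t(x_t) - h_t(\xti)$. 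Expanding $h_t(x_t) - h_t(\xti)$, using optimality of $\xt$ for $h_{t-1}$ to drop the $h_{t-1}$ difference as a non-negative-but-unhelpful term (more precisely, $h_{t-1}(x_t) - h_{t-1}(\xti) \le -\tfrac12(x_t-\xti)^\tp Q_{1:t-1}(x_t-\xti)$), and combining $r_t(x_t) = 0$, one isolates $g_t^\tp(x_t - \xti) - r_t(\xti) \le \tfrac12 (x_t-\xti)^\tp Q_{1:t}(x_t-\xti) \le -g_t^\tp(x_t-\xti) + \text{(slack)}$; solving the resulting inequality of the shape $\tfrac12 \norm{A_t v}^2 \le g_t^\tp v$ with $v = x_t - \xti$ and Cauchy–Schwarz in the $A_t$ / $A_t^{-1}$ dual norms yields $g_t^\tp(x_t - \xti) \le \norm{A_t^{-1} g_t}^2$ and $r_t(\xti) \ge 0$, hence $g_t^\tp(x_t - x_{t+1}) - r_t(x_{t+1}) \le \norm{A_t^{-1}g_t}^2$. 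Here it is essential that $Q_1 \in \Snpp$ so that $A_1$, and thus every $A_t$, is invertible.

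Summing the per-round bounds and combining with the first step gives exactly $\Regret \le r_{1:T}(\xs) + \sum_{t=1}^T \norm{A_t^{-1}g_t}^2$. I expect the main obstacle to be the second step: carefully handling the constrained (rather than unconstrained) minimization — the minimizer of $h_t$ over $\Fs$ need not be interior, so one must use the variational inequality $\langle \grad h_t(\xti), x - \xti\rangle \ge 0$ for all $x \in \Fs$ rather than $\grad h_t(\xti) = 0$ — and bookkeeping the interaction between the proximal center $\xt$ and the accumulated curvature $Q_{1:t}$ so that the telescoping of the strong-convexity lower bounds is clean. A subtlety worth flagging is that strong convexity of $h_t$ holds with respect to $Q_{1:t}$ even though each new regularizer $r_t$ contributes only $Q_t$: the curvature accumulates, which is precisely why the bound features $A_t = Q_{1:t}^{1/2}$ and the harmonic-like term $g_t^\tp Q_{1:t}^{-1} g_t$ appears.
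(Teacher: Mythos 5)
Your overall strategy --- prove the follow-the-leader telescoping bound, then control the per-round stability term $g_t^\tp(\xt - \xti)$ via strong convexity of the cumulative regularized objective combined with a dual-norm Cauchy--Schwarz --- is a legitimate and genuinely different route from the paper's. The paper instead re-expresses $\xt$ and $\xti$ as projections (in the $\norm{A_t\,\cdot\,}$ norm) of explicit unconstrained minimizers and exploits non-expansiveness of that projection (Lemmas~\ref{lem:lazy_proj}, \ref{lem:a_proj}, \ref{lem:not_btarl_term}); your approach avoids the lazy-projection characterization entirely and works purely through convexity inequalities, which is arguably closer to standard online-learning machinery. Your first step is essentially Lemma~\ref{lem:ftrl}, with the $-\sum_t r_t(\xti)$ slack retained rather than dropped, and is fine.

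The second step, however, has a gap that costs a constant factor. Writing $v = \xt - \xti$, your intermediate chain asserts $g_t^\tp v - r_t(\xti) \le \tfrac12 v^\tp Q_{1:t} v$, which has the wrong direction (the strong-convexity and optimality inequalities you invoke give $\ge$, not $\le$), and the ``resulting inequality'' you then solve, $\tfrac12 \norm{A_t v}^2 \le g_t^\tp v$, is not strong enough: with Cauchy--Schwarz it only yields $\norm{A_t v} \le 2\norm{A_t^{-1}g_t}$ and hence $g_t^\tp v \le 2\norm{A_t^{-1}g_t}^2$, a factor of two worse than the theorem. To recover the stated constant you need $\norm{A_t v}^2 \le g_t^\tp v$ (no $\tfrac12$). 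The cleanest way to get it within your framework is to first observe, as the paper does at the start of its proof, that since $r_t$ is non-negative and $r_t(\xt)=0$, the point $\xt$ minimizes not only $h_{t-1}$ but also $\phi_t \equiv h_{t-1} + r_t$ over $\Fs$. Then apply the strong-convexity inequality (curvature $Q_{1:t}$) for $\phi_t$ at its constrained minimizer $\xt$, evaluated at $\xti$, and for $\phi_t + g_t^\tp(\cdot) = h_t$ at its constrained minimizer $\xti$, evaluated at $\xt$; adding the two gives $v^\tp Q_{1:t}v \le g_t^\tp v$, and dual-norm Cauchy--Schwarz then yields exactly $g_t^\tp v \le \norm{A_t^{-1} g_t}^2$. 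Equivalently, in your bookkeeping: the $h_{t-1}$ slack $\tfrac12 v^\tp Q_{1:t-1} v$ you labeled ``unhelpful'' and the identity $r_t(\xti) = \tfrac12 v^\tp Q_t v$ together contribute exactly another $\tfrac12 v^\tp Q_{1:t} v$, so neither should be discarded. With that repair your argument is correct and reproduces the theorem with the right constant.
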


To prove Theorem \ref {thm:adaptive_norms} we will make use of the
following bound on the regret of FTRL, which holds for arbitrary
(possibly non-convex) loss functions.  
\shortlong{
This lemma can be proved
along the lines of \cite{kv03}; for a complete proof see
\cite[Appendix A]{mcmahan10long}.}{
This lemma can be proved
along the lines of \cite{kv03}; for completeness, a proof 
is included in Appendix~\ref{sec:ftrl_appendix}.
}
\begin {lemma} \label {lem:ftrl}
Let $r_1, r_2, \ldots, r_T$ be a sequence of non-negative functions.
The regret of \FTPRL (which plays $\xt$ as defined by
Equation~\eqref{eq:ftrlx}) is bounded by
\[
r_{1:T}(\xs) + \sum_{t=1}^T \paren{f_t (\xt) - f_t(\xti)}
\]
where $\xs$ is the post-hoc optimal feasible point.
\end {lemma}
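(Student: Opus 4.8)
The plan is to reduce the claim to the standard ``be-the-leader'' inequality that underlies all analyses of follow-the-leader methods (cf.\ \cite{kv03}) and requires no convexity of the $f_t$. Write $h_\tau(x) = r_\tau(x) + f_\tau(x)$, so that the update \eqref{eq:ftrlx} is exactly $\xti = \argmin_{x \in \Fs} h_{1:t}(x)$ with $h_{1:t} = \sum_{\tau=1}^t h_\tau$. The point $x_1 = 0$ never actually enters the argument: it contributes only the term $f_1(x_1)$ to the regret, whereas every point manipulated below, namely $x_2, \dots, x_{T+1}$, is a genuine minimizer of a partial sum, so the empty-sum convention for the ``$t=0$'' update causes no trouble.

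First I would prove that for every $u \in \Fs$,
\[
\sum_{t=1}^T h_t(\xti) \;\le\; \sum_{t=1}^T h_t(u),
\]
by induction on $T$. The case $T = 1$ is immediate from $x_2 = \argmin_x h_1(x)$. For the inductive step, apply the hypothesis for $T-1$ rounds at the specific comparator $u = x_{T+1}$, which gives $\sum_{t=1}^{T-1} h_t(\xti) \le \sum_{t=1}^{T-1} h_t(x_{T+1})$; adding $h_T(x_{T+1})$ to both sides and using $h_{1:T}(x_{T+1}) \le h_{1:T}(u)$ (valid since $x_{T+1}$ minimizes $h_{1:T}$ over $\Fs$) closes the induction. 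This ``plug in the next leader as the comparator'' move is the one nonobvious step, and essentially the only place any idea is needed; the lemma is otherwise a bookkeeping exercise, so I do not expect a real obstacle.

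To finish, instantiate the inequality at $u = \xs$ and expand $h_t = r_t + f_t$:
\[
\sum_{t=1}^T f_t(\xti) \;\le\; f_{1:T}(\xs) + r_{1:T}(\xs) - \sum_{t=1}^T r_t(\xti) \;\le\; f_{1:T}(\xs) + r_{1:T}(\xs),
\]
where the final inequality is the only use of the hypothesis $r_t \ge 0$. Rearranging as $-f_{1:T}(\xs) \le r_{1:T}(\xs) - \sum_{t=1}^T f_t(\xti)$, adding $\sum_{t=1}^T f_t(\xt)$ to both sides, and recalling $\Regret = \sum_{t=1}^T f_t(\xt) - f_{1:T}(\xs)$ gives $\Regret \le r_{1:T}(\xs) + \sum_{t=1}^T \paren{f_t(\xt) - f_t(\xti)}$, as claimed. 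Note that convexity of the $f_t$ is nowhere used, matching the statement, and that $\xs$ enters only as an arbitrary comparator, so in fact the bound holds against any fixed $u \in \Fs$ with $\xs$ replaced by $u$.
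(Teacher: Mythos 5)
Your proof is correct and follows essentially the same approach as the paper's appendix proof: an inductive ``be-the-leader'' argument on the regularized losses followed by dropping the non-negative $r_t$ terms at the played points. The only difference is cosmetic---the paper splits the argument into two lemmas (a BTL lemma on arbitrary losses, then a BTRL lemma obtained by applying BTL to $f_t + r_t$), while you fold both into a single induction on $h_t = f_t + r_t$ with a free comparator $u$; the ``plug in the next leader'' step you flag as the one real idea is exactly the step the paper uses inside its BTL induction.
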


Once Lemma \ref {lem:ftrl} is established, to prove Theorem \ref
{thm:adaptive_norms} it suffices to show that for all $t$,
\begin {equation} \label {eq:toprove}
f_t(\xt) - f_t(\xti) \le \norm{A_t^{-1} g_t}^2.
\end {equation}

To show this, we first establish an alternative characterization of our
algorithm as solving an unconstrained optimization followed by a
suitable projection onto the feasible set.  Define the projection
operator,
\[
\Proj_{\Fs,A}(u) = \argmin_{x \in \Fs} \norm{A(x - u)}
\]
We will show that the following is an equivalent formula for $\xt$:
\begin {align} 
\uti &= \argmin_{u \in \reals^n} \oset { r_{1:t}(u) + \gtt \cdot u } \notag \\
\xti &= \Proj_{\Fs,A_t}\paren {\uti}. \label {eq:ftrlislazyprojection}
\end {align}

This characterization will be useful, because the unconstrained
solutions depend only on the linear functions $g_t$, and the quadratic
regularization, and hence are easy to manipulate in closed form.

To show this equivalence, first note that because $Q_t \in \Snp$ is symmetric,
\[ r_t(u) = \frac{1}{2} (u - x_t)^\tp Q_t(u - x_t) 
          = \frac{1}{2} u^\tp Q_t u - x_t^\tp Q_tu_t + \frac{1}{2} x_t^\tp Q_tx_t.\]

Defining
constants $q_t = Q_t x_t$ and $k_t = \frac{1}{2} x_t^\tp Q_tx_t $, we can
write
\begin{equation}\label{eq:rtt}
 \rtt(u) = \frac{1}{2} u^\tp \Qtt u - q_{1:t} u + k_{1:t}.
\end{equation}

The equivalence is then a corollary of the following lemma,
choosing $Q = \Qtt$ and $h = \gtt - q_{1:t}$ (note that the
constant term $k_{1:t}$ does not influence the argmin).
\begin{lemma}\label{lem:lazy_proj} 
Let $Q \in \Snpp$ and $h \in \R^n$, and consider the function
\[ f(x) = h^\tp x + \frac{1}{2}x^\tp Qx.\]
Let $\us = \argmin_{u \in \R^n} f(u)$.  Then, letting $A = Q^{\frac{1}{2}}$, we have
$\PFA(\us) = \argmin_{x \in \Fs} f(x).$
\end{lemma}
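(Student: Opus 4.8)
The plan is to complete the square: show that $f$ differs from the map $x \mapsto \frac{1}{2}\norm{A(x-\us)}^2$ only by an additive constant, so the two functions have the same minimizer over $\Fs$, and then observe that passing to the square root does not change the minimizer of a nonnegative function.

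First I would record two elementary facts. Since $Q \in \Snpp$ it is invertible and $f$ is strictly convex, so its unique unconstrained minimizer is found by setting $\grad f(x) = h + Qx$ to zero, giving $\us = -Q^{-1}h$, equivalently $Q\us = -h$. Also, because $A = Q^{\frac{1}{2}} \in \Snpp$ is symmetric, we have $A^\tp A = AA = Q$, hence $\norm{A(x-\us)}^2 = (x-\us)^\tp Q(x-\us)$ for all $x$.

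Next I would expand, using symmetry of $Q$ and then $Q\us = -h$,
\[
\frac{1}{2}\norm{A(x-\us)}^2
= \frac{1}{2} x^\tp Q x - \us^\tp Q x + \frac{1}{2}\us^\tp Q \us
= \frac{1}{2} x^\tp Q x + h^\tp x + \frac{1}{2}\us^\tp Q\us
= f(x) + c,
\]
where $c = \frac{1}{2}\us^\tp Q\us$ is independent of $x$. Consequently, minimizing $f$ over $\Fs$ is equivalent to minimizing $\frac{1}{2}\norm{A(x-\us)}^2$ over $\Fs$; and since $t \mapsto \sqrt{t}$ is strictly increasing on $[0,\infty)$ and $\norm{A(x-\us)} \ge 0$, this is in turn equivalent to minimizing $\norm{A(x-\us)}$ over $\Fs$, which is exactly $\PFA(\us)$ by definition. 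Both sides are well-defined as single points (the left because $f$ is strictly convex on the closed bounded convex set $\Fs$, the right because $A$ invertible makes $x \mapsto \norm{A(x-\us)}$ strictly convex), so $\PFA(\us) = \argmin_{x \in \Fs} f(x)$.

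I do not expect a real obstacle here; the argument is a routine completion of the square. The only points requiring care are (i) invoking symmetry of $A$ to identify $A^\tp A$ with $Q$ rather than carrying an extra transpose through the computation, and (ii) making explicit that both the additive constant $c$ and the monotone reparametrization by the square root leave the constrained minimizer unchanged.
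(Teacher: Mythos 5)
Your proof is correct and follows exactly the same route as the paper: compute the unconstrained minimizer $\us = -Q^{-1}h$, then complete the square to show $\frac{1}{2}\norm{A(x-\us)}^2 = f(x) + \text{const}$, so the constrained minimizers coincide and equal $\PFA(\us)$. Your version is a bit more explicit about strict convexity/uniqueness and the monotonicity of the square root, but the substance is identical.
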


\begin{proof}
Note that $\grad_u f(u) = h + Qu$, implying that $\us = -Q^{-1}h$.
Consider the function 
\[f'(x) = \frac{1}{2}\norm{Q^{\frac{1}{2}}(x - \us)}^2 
        = \frac{1}{2}(x - \us)^\tp Q(x - \us).\]
We have
\begin{align*}
f'(x) &= %
 \frac{1}{2}\Big(x^\tp Qx - 2 x^\tp Q \us + \us^\tp Q\us\Big) & \text{(because $Q$ is symmetric)} \\
 &= \frac{1}{2}\Big(x^\tp Qx + 2 x^\tp Q (Q)^{-1}h + \us^\tp Q \us \Big) \\
 &= \frac{1}{2}\Big(x^\tp Qx + 2 x^\tp h + \us^\tp Q \us \Big) \\
 &= f(x) + \frac{1}{2}\us^\tp Q \us \mbox { .} 
\end{align*}
Because $\frac{1}{2} \us^\tp Q \us$ is constant with respect to $x$, it
follows that
\[
\argmin_{x \in \Fs} f(x) = \argmin_{x \in \Fs} f'(x) =\PFA(\us),
\]
where the last equality follows from the definition of the projection operator.
\end{proof}

We now derive a closed-form solution to the unconstrained problem.  It
is easy to show $\grad \rt(u) = Q_tu - Q_tx_t$, and so
\[\grad \rtt(u) = \Qtt u - \sum_{\tau=1}^tQ_\tau x_\tau.\]
Because $\uti$ is the optimum of the (strongly convex) unconstrained problem,
and $\rtt$
is differentiable,
we must have $\grad
\rtt(\uti) + \gtt = 0$.  Hence, we conclude
$\Qtt \uti - \sum_{\tau=1}^tQ_\tau x_\tau + \gtt = 0,$
or
\begin {equation} \label {eq:unconstrained_opt}
\uti = \Qtt^{-1}\paren{\sum_{\tau=1}^tQ_\tau x_\tau - \gtt}.
\end {equation}

This closed-form solution will let us bound the difference between
$\ut$ and $\uti$ in terms of $g_t$.  The next Lemma relates this
distance to the difference between $\xt$ and $\xti$, which determines
our per round regret (Equation~\eqref{eq:toprove}).  In particular, we
show that the projection operator only makes $\ut$ and $\uti$ closer
together, in terms of distance as measured by the norm $\norm{A_t
\cdot}$.  We defer the proof to the end of the section.

\begin{lemma}\label{lem:a_proj}
Let $Q \in \Snpp$ with $A = Q^\frac{1}{2}$.  Let $\Fs$ be a convex
set, and let $u_1, u_2 \in
\R^n$, with $x_1 = \Proj_{\Fs,A}(u_1)$ and $x_2 = \Proj_{\Fs,A}(u_2)$.  Then,
\[\norm{A (x_2 - x_1)} \leq \norm{A(u_1 - u_2)}.\]
\end{lemma}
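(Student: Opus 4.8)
The plan is to recognize $\Proj_{\Fs,A}(u) = \argmin_{x\in\Fs}\norm{A(x-u)}$ as an ordinary Euclidean projection after a change of coordinates, and then invoke the standard nonexpansiveness of Euclidean projection onto a convex set. Concretely, substitute $y = Ax$, $v = Au$. Since $A = Q^{1/2}$ is symmetric positive definite, it is invertible, so the map $x \mapsto Ax$ is a bijection of $\R^n$, and $\Fs' \equiv A\Fs = \set{Ax \mid x \in \Fs}$ is again a closed convex set (the image of a convex set under a linear map is convex, and closedness is preserved since $A$ is a homeomorphism). Under this substitution, $\norm{A(x-u)} = \norm{Ax - Au} = \norm{y - v}$, so $\Proj_{\Fs,A}(u) = A^{-1}\big(\Proj_{\Fs'}(Au)\big)$, where $\Proj_{\Fs'}$ denotes the usual $L_2$ projection onto $\Fs'$.

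**Next I would** apply the classical fact that Euclidean projection onto a closed convex set is nonexpansive: for any $v_1, v_2 \in \R^n$, $\norm{\Proj_{\Fs'}(v_1) - \Proj_{\Fs'}(v_2)} \le \norm{v_1 - v_2}$. This follows from the variational characterization of the projection — $\langle v_i - \Proj_{\Fs'}(v_i),\, z - \Proj_{\Fs'}(v_i)\rangle \le 0$ for all $z \in \Fs'$ — by adding the two inequalities (with $z$ taken to be the other projection point) and applying Cauchy-Schwarz. Setting $y_i = Ax_i$ and $v_i = Au_i$, we get $y_1 = \Proj_{\Fs'}(v_1)$ and $y_2 = \Proj_{\Fs'}(v_2)$, hence
\[
\norm{A(x_2 - x_1)} = \norm{y_2 - y_1} = \norm{\Proj_{\Fs'}(v_2) - \Proj_{\Fs'}(v_1)} \le \norm{v_2 - v_1} = \norm{A(u_2 - u_1)},
\]
which is exactly the claimed inequality.

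**The main obstacle** is not any single deep step — each ingredient is standard — but rather making sure the reduction is airtight: that $A$ really is invertible (here guaranteed by $Q \in \Snpp$, so $A = Q^{1/2} \in \Snpp$ as well), that $A\Fs$ inherits closedness and convexity, and that the argmin defining $\Proj_{\Fs,A}$ is attained and unique (it is, since $\norm{A(\cdot - u)}^2$ is a strictly convex coercive function on the closed convex set $\Fs$). One could alternatively give a self-contained proof that avoids the coordinate change by working directly with the inner product $\langle x, y\rangle_Q \equiv x^\tp Q y$: the operator $\Proj_{\Fs,A}$ is precisely the projection in this inner product, $\norm{A\cdot}$ is the induced norm, and the nonexpansiveness argument goes through verbatim with $\langle\cdot,\cdot\rangle$ replaced by $\langle\cdot,\cdot\rangle_Q$. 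I would likely present the $\langle\cdot,\cdot\rangle_Q$ version, since it is the cleanest and keeps everything in the original coordinates; the key line is that for $x_i = \Proj_{\Fs,A}(u_i)$, the optimality conditions give $\langle u_i - x_i,\, x_j - x_i\rangle_Q \le 0$ for $j \ne i$, and summing the two instances yields $\norm{A(x_2-x_1)}^2 \le \langle u_2 - u_1,\, x_2 - x_1\rangle_Q \le \norm{A(u_2-u_1)}\,\norm{A(x_2-x_1)}$, from which the result follows upon dividing through (the case $x_1 = x_2$ being trivial).
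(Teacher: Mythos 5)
Your proposal is correct, and the direct $\langle\cdot,\cdot\rangle_Q$ version you say you would present is essentially the paper's own argument: the paper also starts from the first-order optimality conditions $(Qx_i - Qu_i)^\tp(x_j - x_i)\ge 0$, adds them to get $\langle u_2-u_1,\,x_2-x_1\rangle_Q \ge \norm{A(x_2-x_1)}^2$, and then finishes. The only cosmetic difference is in the last step — you invoke Cauchy--Schwarz and divide (handling the $x_1=x_2$ case separately), while the paper instead expands $(\hat u - \hat x)^\tp Q(\hat u - \hat x)\ge 0$ to conclude $\hat u^\tp Q\hat u \ge \hat x^\tp Q \hat x$ without any division; your change-of-variables framing via $y = Ax$ is likewise a valid and equivalent repackaging.
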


We now prove the following lemma, which will immediately yield the
desired bound on $f_t (\xt) - f_t(x_{t+1})$.

\newcommand{\PFAt}{\Proj_{\Fs,A}}

\begin{lemma}\label{lem:not_btarl_term}
Let $Q \in \Snpp$ with $A = Q^\frac{1}{2}$.  Let $v, g \in \R^n$, and
let $u_1 = -Q^{-1} v$ and $u_2 = -Q^{-1} (v + g)$. Then, letting $x_1
= \PFAt(u_1)$ and $x_2 = \PFAt(u_2)$,
\[ g^\tp (x_1 - x_2) \leq \norm{A^{-1} g}^2.\]
\end{lemma}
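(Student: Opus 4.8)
The plan is to reduce everything to the $A$-weighted inner product and then combine Cauchy--Schwarz with the non-expansiveness of $\PFAt$ established in Lemma~\ref{lem:a_proj}. The first step is the key algebraic observation: since $u_1 = -Q^{-1}v$ and $u_2 = -Q^{-1}(v+g)$, we have
\[
u_1 - u_2 = -Q^{-1}v + Q^{-1}(v+g) = Q^{-1}g,
\]
and consequently $g = Q(u_1-u_2)$. This lets me rewrite the quantity of interest in terms of the inner product induced by $A$: using $Q = A^\tp A = AA$ (as $A = Q^{1/2}$ is symmetric),
\[
g^\tp(x_1 - x_2) = (u_1 - u_2)^\tp Q (x_1 - x_2) = \big(A(u_1-u_2)\big)^\tp \big(A(x_1 - x_2)\big).
\]

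Next I apply the Cauchy--Schwarz inequality to the right-hand side, obtaining
\[
g^\tp(x_1 - x_2) \leq \norm{A(u_1-u_2)} \cdot \norm{A(x_1 - x_2)}.
\]
Now I invoke Lemma~\ref{lem:a_proj}, which says exactly that $\norm{A(x_1 - x_2)} \leq \norm{A(u_1 - u_2)}$ since $x_i = \PFAt(u_i)$; substituting gives $g^\tp(x_1-x_2) \leq \norm{A(u_1-u_2)}^2$. Finally I identify this norm with the target quantity: since $u_1 - u_2 = Q^{-1}g$, we have $A(u_1-u_2) = A Q^{-1} g = Q^{1/2}Q^{-1}g = Q^{-1/2} g = A^{-1}g$, so $\norm{A(u_1-u_2)}^2 = \norm{A^{-1}g}^2$, completing the argument.

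I do not anticipate a genuine obstacle here, as the calculation is short once the substitution $g = Q(u_1-u_2)$ is made; the only thing to be careful about is the bookkeeping with $A$ versus $A^{-1}$ and the symmetry of $Q$ and $A$ (so that $Q = AA = A^\tp A$ and $AQ^{-1} = A^{-1}$). The one conceptual point worth highlighting is that all the real work has already been done in Lemma~\ref{lem:a_proj}: this lemma is essentially just the combination of that non-expansiveness with Cauchy--Schwarz, reorganized so that the bound comes out in the dual-norm form $\norm{A^{-1}g}^2$ needed to establish Equation~\eqref{eq:toprove} (via $v = \gtt - q_{1:t} - g_t$, $g = g_t$, and $Q = \Qtt$, so that $u_1 = \ut$ and $u_2 = \uti$).
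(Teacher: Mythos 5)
Your proof is correct and takes essentially the same route as the paper's: both combine Cauchy--Schwarz (equivalently, the duality of $\norm{A\,\cdot}$ and $\norm{A^{-1}\cdot}$) with Lemma~\ref{lem:a_proj} and the identity $u_1 - u_2 = Q^{-1}g$, differing only in whether the dual-norm inequality is applied to $g^\tp(x_1-x_2)$ directly or after first substituting $g = Q(u_1-u_2)$. One small caution on your closing remark: when the lemma is invoked in the proof of Theorem~\ref{thm:adaptive_norms} with $Q=\Qtt$ and $v=g_{1:t-1}-q_{1:t}$, the resulting $u_1=-\Qtt^{-1}v$ is \emph{not} the same as the $\ut$ of Equation~\eqref{eq:ftrlislazyprojection} (which uses $Q_{1:t-1}$); both project to $x_t$ under their respective $A$-norms, but they are different points.
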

\begin{proof}
The fact that $Q = A^\tp A \mgt 0$ implies that
$\norm{A \cdot}$ and $\norm{A^{-1}\cdot}$ are dual norms (see
for example~\cite[Sec. 9.4.1, pg. 476]{boyd}).  Using this fact,
\begin{align*}
 g^\tp (x_1 - x_2)
  &\leq \norm{A^{-1} g} \cdot \norm{A (x_1 - x_2)}\\
  &\leq \norm{A^{-1} g} \cdot \norm{A (u_1 - u_2)}
         && \text{(Lemma~\ref{lem:a_proj})}\\
  &= \norm{A^{-1} g} \cdot \norm{A (Q^{-1} g)} \\
  &= \norm{A^{-1} g} \cdot \norm{A (A^{-1}A^{-1}) g)} 
      && \text{(Because $Q^{-1} = (A A)^{-1}$)}\\
  &= \norm{A^{-1} g} \cdot \norm{A^{-1} g}. 
\end{align*}
\end{proof}

\noindent
\textbf{Proof of Theorem~\ref{thm:adaptive_norms}:}
First note that because $r_t(\xt) = 0$ and  $r_t$ is non-negative, 
$\xt = \argmin_{x \in \Fs} r_t(x)$.  For any functions $f$ and
$g$, if $x^* = \argmin_{x \in \Fs} f(x)$ and $x^* = \argmin_{x
\in \Fs} g(x)$, then 
\[x^* = \argmin_{x \in \Fs} \oset {f(x) +g(x)}.\]
Thus we have
\begin {align*}
\xt
& = \argmin_{x \in \Fs} \oset { g_{1:t-1} x + r_{1:t-1}(x) } \\
& = \argmin_{x \in \Fs} \oset { g_{1:t-1} x + r_{1:t}(x) } 
  && \text{(Because $x_t = \argmin_{x \in \Fs} r_t(x)$.)}\\
& = \argmin_{x \in \Fs} \oset { h x + \frac 1 2 x^\tp \Qtt x } \\
\end {align*}
where the last line follows from Equation~\eqref{eq:rtt},
letting $h = g_{1:t-1} - q_{1:t} = g_{1:t-1} -
\sum_{\tau=1}^{t} Q_\tau x_\tau$, and dropping the constant $k_{1:t}$.
For $\xti$, we have directly from the definitions
\begin{align*}
\xti =  \argmin_{x \in \Fs} \oset { g_{1:t} x + r_{1:t}(x) } 
     = \argmin_{x \in \Fs} \oset { (h + g_t) x + \frac 1 2  x^\tp \Qtt x }.
\end{align*}
Thus, Lemma~\ref{lem:lazy_proj} implies $\xt = \Proj_{\Fs, A_t}(
-(\Qtt)^{-1} h)$ and similarly $\xti = \Proj_{\Fs, A_t}( -(\Qtt)^{-1} (h
+ g_t))$.  Thus, by Lemma~\ref{lem:not_btarl_term}, $g_t (\xt - \xti)
\le
\norm{A_t^{-1} g_t}^2$.
The theorem then follows from Lemma~\ref{lem:ftrl}.
\qed

\ \\
\noindent
\textbf{Proof of Lemma~\ref{lem:a_proj}:}
Define
\[B(x, u) = \frac{1}{2}\norm{A(x-u)}^2 = \frac{1}{2}(x-u)^\tp Q(x-u),\]
 so we can write equivalently
\[x_1 = \argmin_{x \in \Fs} B(x, u_1).\]
Then, note that $\grad_x B(x, u_1) = Qx - Qu_1$, and so we must have
$(Qx_1 - Qu_1)^\tp (x_2 - x_1) \geq 0$; otherwise for $\delta$
sufficiently small the point $x_1  + \delta (x_2 - x_1)$
would belong to $\Fs$ (by convexity) and would be closer to $u_1$ than
$x_1$ is.
Similarly, we must have $(Qx_2 - Qu_2)^\tp (x_1 - x_2) \geq 0$.
Combining these, we have the following equivalent inequalities:
\begin{gather*}
(Qx_1 - Qu_1)^\tp (x_2 - x_1) - (Qx_2 - Qu_2)^\tp (x_2 - x_1)\geq 0\\
(x_1 - u_1)^\tp Q(x_2 - x_1) - (x_2 - u_2)^\tp Q(x_2 - x_1)\geq 0\\
-(x_2 - x_1)^\tp Q(x_2 - x_1) + (u_2 - u_1)^\tp Q(x_2 - x_1)\geq 0\\
(u_2 - u_1)^\tp Q(x_2 - x_1) \geq (x_2 - x_1)Q(x_2 - x_1).
\end{gather*}
Letting $\uu = u_2 - u_1$, and $\xx = x_2 - x_1$, we have $\xx^\tp Q\xx
\leq \uu^\tp Q\xx$.  Since $Q$ is positive semidefinite, we
have $(\uu - \xx)^\tp Q(\uu - \xx) \geq 0$, or equivalently $\uu^\tp Q\uu +
\xx^\tp Q\xx - 2 \xx^\tp Q\uu \geq 0$ (using the fact $Q$ is also symmetric).  Thus,
\begin{align*}
\uu^\tp Q\uu  
    \ \geq \ - \xx^\tp Q\xx + 2 \xx^\tp Q\uu 
    \ \geq \ - \xx^\tp Q\xx + 2 \xx^\tp Q\xx 
    \  =  \ \xx^\tp Q\xx,
\end{align*}
and so
\[\norm{A(u_2 - u_1)}^2 = \uu^\tp Q\uu \geq  \xx^\tp Q\xx = \norm{A(x_2 - x_1)}^2.\]
\qed

\section{Specific Adaptive Algorithms and Competitive Ratios}
\label{sec:ratios}

Before proceeding to the specific results, we establish several
results that will be useful in the subsequent arguments.  In order to
prove that adaptive schemes for selecting $Q_t$ have good competitive
ratios for the bound optimization problem, we will need to compare the
bounds obtained by the adaptive scheme to the optimal post-hoc bound
of Equation~\eqref{eq:posthocbound}.  Suppose the sequence $Q_1,
\dots, Q_T$ is optimal for Equation~\eqref{eq:posthocbound}, and
consider the alternative sequence $Q_1' = Q_{1:T}$ and $Q_t' = 0$ for
$t > 1$.  Using the fact that $Q_{1:t}
\mgeq Q_{1:t-1}$ implies $Q_{1:t}^{-1} \mleq Q_{1:t-1}^{-1}$, it is easy to
show the alternative sequence also achieves the minimum.
It follows that a sequence with $Q_1 = Q$ on the first round, and $Q_t
= 0$ thereafter is always optimal.  Hence, to solve for the post-hoc
bound we can solve an optimization of the form
\begin{equation}\label{eq:fixedQ}
 \inf_{Q \in \Qfs} \quad \left(\max_{\hy \in \Fsym} \oset { \frac{1}{2} \hy^\tp Q\hy }
             + \sum_{t=1}^T g_t^\tp Q^{-1}g_t\right).
\end{equation}
The diameter of $\Fs$ is $D \equiv \max_{y, y' \in \Fs} \norm{y -
y'}_2$, and so for $\hy \in \Fsym$, $\norm{\hy}_2 \leq D$.  
When $\Fs$ is symmetric ($x \in \Fs$ implies $-x \in \Fs$), we have $y
\in \Fs$ if and only if $2 y \in \Fsym$, so \eqref{eq:fixedQ} is
equivalent to:
\begin{equation}\label{eq:fixedQF}
 \inf_{Q \in \Qfs} \quad  \left(\max_{y \in \Fs} \oset { 2 y^\tp Qy }
             + \sum_{t=1}^T g_t^\tp Q^{-1}g_t\right).
\end{equation}
For simplicity of exposition, we assume $g_{1, i} > 0$ for all $i$,
which ensures that only positive definite
matrices can be optimal.\footnote{In the case where $\Fs$ has 0 width
in some direction, the infimum will not be attained by a finite $Q$,
but by a sequence that assigns $0$ penalty (on the right-hand side) to
the components of the gradient in the direction of 0 width, requiring
some entries in $Q$ to go to $\infty$.}
This assumption also ensures $Q_1 \in \Snpp$ for the adaptive schemes
discussed below, as required by Theorem~\ref{thm:adaptive_norms}.
This is without loss of generality, as we can always hallucinate
an initial loss function with arbitrarily small components, and this changes
regret by an arbitrarily small amount.
\shortlong{We will also use the following Lemma~\cite{auer00adaptive}:}{
We will also use the following Lemma from~\emcite{auer00adaptive}.  For
completeness, a proof is included in Appendix~\ref{ap:lem_sum_proof}.
}
\begin {lemma} \label {lem:sum}
For any non-negative real numbers $x_1, x_2, \ldots, x_n$,
\[
\sum_{i=1}^n \frac { x_i } { \sqrt { \sum_{j=1}^i x_j } }
  \le 2 \sqrt { \sum_{i=1}^n x_i } \mbox { .}
\]
\end {lemma}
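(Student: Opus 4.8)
The plan is to prove this by induction on $n$, which is the standard approach for telescoping-type inequalities of this form. The base case $n=1$ is immediate: the left-hand side is $x_1/\sqrt{x_1} = \sqrt{x_1}$, and the right-hand side is $2\sqrt{x_1}$, so the inequality holds (handling $x_1 = 0$ separately, or adopting the convention that the summand is $0$ when the denominator vanishes). For the inductive step, I would assume the claim for $n-1$ and write $S_n = \sum_{j=1}^n x_j$, so that
\[
\sum_{i=1}^n \frac{x_i}{\sqrt{\sum_{j=1}^i x_j}}
 = \sum_{i=1}^{n-1} \frac{x_i}{\sqrt{\sum_{j=1}^i x_j}} + \frac{x_n}{\sqrt{S_n}}
 \le 2\sqrt{S_{n-1}} + \frac{x_n}{\sqrt{S_n}}
 = 2\sqrt{S_n - x_n} + \frac{x_n}{\sqrt{S_n}}.
\]
It then suffices to show $2\sqrt{S_n - x_n} + x_n/\sqrt{S_n} \le 2\sqrt{S_n}$.

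The remaining step is an elementary one-variable inequality. Writing $s = S_n$ and $a = x_n$ with $0 \le a \le s$, I want $2\sqrt{s-a} + a/\sqrt{s} \le 2\sqrt{s}$, i.e. $a/\sqrt{s} \le 2(\sqrt{s} - \sqrt{s-a})$. Multiplying the right side by the conjugate $(\sqrt{s}+\sqrt{s-a})/(\sqrt{s}+\sqrt{s-a})$ gives $2(\sqrt{s}-\sqrt{s-a}) = 2a/(\sqrt{s}+\sqrt{s-a}) \ge 2a/(2\sqrt{s}) = a/\sqrt{s}$, since $\sqrt{s-a} \le \sqrt{s}$. This closes the induction. (Alternatively, one can avoid the conjugate trick by noting $s \ge \sqrt{s(s-a)}$ since $s - a \le s$, hence $s - a + 2\sqrt{s(s-a)}\cdot\tfrac{a}{2s} \ge \dots$; but the conjugate argument is cleanest.)

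I do not anticipate a real obstacle here — the only mild subtlety is bookkeeping around the degenerate case where some prefix sum $\sum_{j=1}^i x_j$ is zero (which forces $x_1 = \dots = x_i = 0$); in that case those terms contribute nothing and can be dropped from the sum without affecting either side, so we may assume all prefix sums are strictly positive. With that understood, the induction goes through verbatim.
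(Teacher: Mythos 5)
Your proof is correct and follows essentially the same induction as the paper's, with the identical decomposition of the sum into the first $n-1$ terms (handled by the inductive hypothesis) plus the last term. The only difference is in the closing one-variable step: the paper shows $2\sqrt{Z-x} + x/\sqrt{Z} \le 2\sqrt{Z}$ by observing the left-hand side has negative derivative in $x$ and is hence maximized at $x=0$, whereas you use the algebraic conjugate identity $\sqrt{s}-\sqrt{s-a} = a/(\sqrt{s}+\sqrt{s-a})$ --- a purely cosmetic difference.
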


\subsection {Adaptive coordinate-constant regularization}
\label{sec:global_reg}
We derive bounds where $Q_t$ is chosen from the set $\constQ$, and
show that this algorithm comes within a factor of $\sqrt{2}$ of using
the best constant regularization strength $\lambda I$.  This algorithm
achieves a bound of $\BO(DM\sqrt{T})$ where $D$ is the diameter of
the feasible region and $M$ is a bound on $\norm{g_t}_2$, matching the
best possible bounds in terms of these parameters \cite{abernethy08}. 
We will prove a much stronger competitive guarantee for this algorithm
in Theorem~\ref{thm:sphere}.

\begin{corollary}\label{cor:global}
Suppose $\Fs$ has $L_2$ diameter $D$.  Then, if we run \FTPRL with
diagonal matrices such that
\[ (Q_{1:t})_{ii} = \abar_t = \frac{2\sqrt{G_t}}{D}\]
 where $G_t = \sum_{\tau=1}^t \sum_{i=1}^n g_{\tau,i}^2$, then
\[ \Regret \leq  2 D \sqrt{G_T}.\]
If $\norm{g_t}_2 \leq M$, then $G_T \leq M^2 T$, and this translates
to a bound of $\BO(DM\sqrt{T})$.  When $\Fs = \set{x \mid \norm{x}_2
\leq D/2}$, this bound is $\sqrt{2}$-competitive for the bound
optimization problem over $\constQ$.
\end{corollary}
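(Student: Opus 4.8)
The plan is to feed the prescribed matrices into Theorem~\ref{thm:adaptive_norms} --- equivalently, into the bound $B_R(\vs{Q_T},\vs{g_T})$ of Equation~\eqref{eq:brfunc} --- bound its two pieces separately using the diameter and Lemma~\ref{lem:sum} to obtain $\Regret \le 2D\sqrt{G_T}$, and then compute the post-hoc optimum over $\constQ$ in closed form for the hypersphere. First I would observe that the scheme specifies the \emph{cumulative} matrix $Q_{1:t} = \abar_t I$ with $\abar_t = 2\sqrt{G_t}/D$; since $G_t$ is non-decreasing in $t$, each $Q_t = Q_{1:t} - Q_{1:t-1} = (\abar_t - \abar_{t-1}) I \mgeq 0$ (taking $\abar_0 = 0$), and $Q_1 = \abar_1 I \mgt 0$ because $G_1 = \sum_i g_{1,i}^2 > 0$ under the standing assumption; thus the hypotheses of Theorem~\ref{thm:adaptive_norms} hold, with $A_t = (Q_{1:t})^{1/2} = \sqrt{\abar_t}\,I$.

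Next I would bound the two terms of $B_R$. For the first, every $\hy \in \Fsym$ satisfies $\norm{\hy}_2 \le D$ and $Q_t$ is a nonnegative multiple of $I$, so $\max_{\hy\in\Fsym}\hy^\tp Q_t\hy \le (\abar_t-\abar_{t-1})D^2$, and the sum telescopes: $\frac12\sum_{t=1}^T(\abar_t-\abar_{t-1})D^2 = \frac{D^2}{2}\abar_T = D\sqrt{G_T}$. For the second, $\sum_{t=1}^T g_t^\tp(Q_{1:t})^{-1}g_t = \sum_{t=1}^T \norm{g_t}_2^2/\abar_t = \frac{D}{2}\sum_{t=1}^T \norm{g_t}_2^2/\sqrt{G_t}$, and since $G_t = \sum_{\tau=1}^t\norm{g_\tau}_2^2$, Lemma~\ref{lem:sum} applied with $x_\tau = \norm{g_\tau}_2^2$ bounds this by $\frac{D}{2}\cdot 2\sqrt{G_T} = D\sqrt{G_T}$. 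Adding the two pieces gives $\Regret \le 2D\sqrt{G_T}$. The claim $G_T \le M^2 T$ when $\norm{g_t}_2 \le M$ is immediate, yielding $2D\sqrt{G_T} \le 2DM\sqrt{T} = \BO(DM\sqrt{T})$.

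For the competitive guarantee I would take $\Fs = \set{x \mid \norm{x}_2 \le D/2}$, so that $\Fsym$ is exactly the $L_2$-ball of radius $D$. By the reduction established just before Equation~\eqref{eq:fixedQ}, the post-hoc optimum $\inf_{\vs{Q}\in\constQ^T}B_R(\vs{Q_T},\vs{g_T})$ equals $\inf_{\alpha>0}\paren{\max_{\hy\in\Fsym}\tfrac12\alpha\norm{\hy}_2^2 + \tfrac{1}{\alpha}\sum_t\norm{g_t}_2^2} = \inf_{\alpha>0}\paren{\tfrac{\alpha D^2}{2} + \tfrac{G_T}{\alpha}}$, using that over the ball of radius $D$ the maximum is attained with value $\frac{\alpha D^2}{2}$. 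Minimizing the one-variable convex function $\alpha \mapsto \frac{\alpha D^2}{2} + \frac{G_T}{\alpha}$ (minimizer $\alpha^\star = \sqrt{2G_T}/D$) gives value $\sqrt{2}\,D\sqrt{G_T}$, so $2D\sqrt{G_T} = \sqrt{2}\paren{\sqrt{2}\,D\sqrt{G_T}} = \sqrt{2}\inf_{\vs{Q}\in\constQ^T}B_R(\vs{Q_T},\vs{g_T})$; that is, the scheme is $\sqrt{2}$-competitive over $\constQ$.

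I expect no real obstacle here. The only step needing a little care is the very first one: because the scheme is described through the cumulative matrix $Q_{1:t}$, one must extract $Q_t$ as a difference, check $Q_t \mgeq 0$ (monotonicity of $G_t$) and $Q_1 \mgt 0$ (using $g_{1,i}>0$) so that Theorem~\ref{thm:adaptive_norms} applies, and bound $\max_{\hy\in\Fsym}\hy^\tp Q_t\hy$ by $D^2(\abar_t-\abar_{t-1})$ before the sum telescopes. Everything after that is a direct substitution, a single application of Lemma~\ref{lem:sum}, and an elementary minimization.
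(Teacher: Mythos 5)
Your proof is correct and follows essentially the same route as the paper's: decompose the bound of Equation~\eqref{eq:brfunc} into its two terms, bound the first by telescoping $\frac{1}{2}D^2\abar_T = D\sqrt{G_T}$, bound the second via Lemma~\ref{lem:sum} with $x_\tau = \norm{g_\tau}_2^2$ to get $D\sqrt{G_T}$, and compare with the closed-form post-hoc optimum $\sqrt{2}\,D\sqrt{G_T}$ over $\constQ$ on the $L_2$-ball of radius $D/2$. The only (welcome) difference is that you explicitly verify the hypotheses of Theorem~\ref{thm:adaptive_norms} ($Q_t \mgeq 0$ by monotonicity of $G_t$, and $Q_1 \mgt 0$ from the standing assumption $g_{1,i}>0$), which the paper states only briefly.
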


\begin{proof}
Let the diagonal entries of $Q_t$ all be $\alpha_t = \abar_{t} -
\abar_{t-1}$ (with $\abar_0 = 0$), so $\alpha_{1:t} = \abar_{t}$.
Note $\alpha_t \geq 0$, and so this choice is feasible.
We consider the left and right-hand terms of
Equation~\eqref{eq:brfunc} separately.  For the left-hand term,
letting $\hy_t$ be an arbitrary sequence of points from $\Fsym$, and
noting $\hy_t^\tp \hy_t \leq
\norm{\hy_t}_2 \cdot \norm{\hy_t}_2 \leq D^2$,
\[ 
   \frac{1}{2} \sum_{t=1}^T \hy_t^\tp Q_t\hy_t 
  = \frac{1}{2}\sum_{t=1}^T  \hy_t^\tp \hy_t \alpha_t
  \leq \frac{1}{2}D^2 \sum_{t=1}^T\alpha_t
  =  \frac{1}{2}D^2 \abar_T
  = D\sqrt{G_T}.
\] 
For the right-hand term, we have 
\[
 \sum_{t=1}^T g_t^\tp Q_{1:t}^{-1}g_t
  = \sum_{t=1}^T \sum_{i=1}^n \frac{g_{t,i}^2}{\alpha_{1:t}}
  = \sum_{t=1}^T \frac{D}{2} \frac{\sum_{i=1}^n g_{t,i}^2}{\sqrt{G_t}}
 \leq D \sqrt{G_T},
\]
where the last inequality follows from Lemma~\ref{lem:sum}.  

In order to make a competitive guarantee, we must prove a lower bound
on the post-hoc optimal bound function $B_R$,
Equation~\eqref{eq:fixedQ}.  This is in contrast to the upper bound
that we must show for the regret of the algorithm.  When $\Fs =
\set{x \mid \norm{x}_2 \leq D/2}$, Equation~\eqref{eq:fixedQ} simplifies to exactly
\begin{equation}\label{eq:fixed_posthoc}
\min_{\alpha \geq 0} \oset { \frac{1}{2} \alpha D^2
             + \frac{1}{\alpha} G_T } = D\sqrt{2G_T}
\end{equation}
and so we conclude the adaptive algorithm is $\sqrt{2}$-competitive
for the bound optimization problem.
\end{proof}
\begin{figure*}[ttt!]
 \begin{minipage}[t]{2.8in} \begin{algorithm}[H]
\caption{\FTPRLdiag} 
\label{alg:diag} 
\begin{algorithmic}
   \STATE {\bfseries Input:} feasible set $\Fs \subseteq \times_{i=1}^n [a_i, b_i]$
   \STATE Initialize $x_1 = 0 \in \Fs$
   \STATE $(\forall i),\ G_i = 0, q_i = 0, \lambda_{0,i} = 0, D_i = b_i - a_i$ 
   \FOR{$t=1$ {\bfseries to} $T$}
   \STATE Play the point $x_t$, incur loss $f_t(x_t)$
   \STATE Let $g_t \in \partial f_t(x_t)$
   \FOR{$i=1$ \textbf{to} $n$}
     \STATE $G_i = G_i + g_{t,i}^2$
     \STATE $\lambda_{t,i} = \frac{2}{D_i}\sqrt{G_i} - \lambda_{1:t-1,i}$
     \STATE $q_i = q_i + x_{t,i}\lambda_{t,i}$
     \STATE $u_{t+1, i} = (g_{1:t,i} - q_i)/\lambda_{1:t,i}$
   \ENDFOR
   \STATE $A_t = \text{diag}(\sqrt{\lambda_{1:t,1}}, \dots, 
           \sqrt{\lambda_{1:t,n}})$
   \STATE $x_{t+1} = \text{Project}_{\Fs, A_t}(u_{t+1})$
   \ENDFOR
\end{algorithmic}
\end{algorithm}
\end{minipage}
\hfill
\begin{minipage}[t]{2.8in}
\begin{algorithm}[H]
\caption{\FTPRLscale} 
\label{alg:scale} 
\begin{algorithmic}
   \STATE {\bfseries Input:} 
          feasible set $\Fs \subseteq \set{x \mid \norm{A x} \leq 1}$, \\
   \STATE \ \ with $A \in \Snpp$
   \STATE Let $\hFs = \set{x \mid \norm{x} \leq 1}$
   \STATE Initialize $x_1 = 0$, $(\forall i)\ D_i = b_i - a_i$
   \FOR{$t=1$ {\bfseries to} $T$}
   \STATE Play the point $x_t$, incur loss $f_t(x_t)$
   \STATE Let $g_t \in \partial f_t(x_t)$
   \STATE $\hg_t = (A^{-1})^\tp g_t$
   \STATE $\abar = \sqrt{\sum_{\tau=1}^t \sum_{i=1}^n \hg_{\tau,i}^2}$
   \STATE $\alpha_t = \abar - \alpha_{1:t-1}$
   \STATE $q_t = \alpha_t  x_t$
   \STATE $\hat{u}_{t+1} = (1/\abar)(q_{1:t} - g_{1:t})$
   \STATE $A_t = (\abar I)^\frac{1}{2}$
   \STATE $\hat{x}_{t+1} = \text{Project}_{\hFs, A_t}(\hat{u}_{t+1})$
   \STATE $x_{t+1} = A^{-1}\hat{x}$
   \ENDFOR
\end{algorithmic}
\end{algorithm}
 \end{minipage}
 \hfill
\end{figure*}

\subsection {Adaptive diagonal regularization}
\label{sec:per_coord_rate}

In this section, we introduce and analyze \FTPRLdiag, a specialization
of \FTPRL that uses regularization matrices from $\diagQ$.  Let $D_i =
\max_{x, x' \in \Fs} \abs{x_i - x'_i}$, the width of $\Fs$ along the
$i$th coordinate.  We construct a bound on the regret of
\FTPRLdiag in terms of these $D_i$.  The $D_i$ implicitly define a
hyperrectangle that contains $\Fs$.  When $\Fs$ is in fact such a
hyperrectangle, our bound is $\sqrt{2}$-competitive with the best
post-hoc optimal bound using matrices from $\diagQ$.

\begin {corollary} \label{cor:percoord}
Let $\Fs$ be a convex feasible set of width $D_i$ in coordinate $i$.
We can construct diagonal matrices $Q_t$ such that the $i$th entry on
the diagonal of $Q_{1:t}$ is given by:
\[
\lbar_{t,i} = \frac 2 D_i {\sqrt {\sum_{\tau=1}^{t} g^2_{\tau, i}}}.
\]
Then the regret of \FTPRL satisfies
\[
  \Regret \leq 2 \sum_{i=1}^n D_i \sqrt{\sum_{t=1}^T g^2_{t, i}}.
\]
When $\Fs$ is a hyperrectangle, then this algorithm is
$\sqrt{2}$-competitive with the post-hoc optimal choice of $Q_t$ from
the $\diagQ$. That is,
\[ \Regret \leq \sqrt{2} 
\inf_{Q \in \diagQ} \left( \max_{\hy \in \Fsym} \oset { \frac{1}{2} \hy^\tp Q\hy }
             + \sum_{t=1}^T g_t^\tp Q^{-1}g_t\right).\]

\end {corollary}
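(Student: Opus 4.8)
The plan is to follow the template of Corollary~\ref{cor:global}, but coordinate-by-coordinate rather than globally. First I would exhibit the regularization matrices explicitly: take $Q_t = \text{diag}(\lambda_{t,1},\dots,\lambda_{t,n})$ with $\lambda_{t,i} = \lbar_{t,i} - \lbar_{t-1,i}$ (and $\lbar_{0,i}=0$), so that $\lambda_{1:t,i} = \lbar_{t,i} = \frac{2}{D_i}\sqrt{\sum_{\tau=1}^t g^2_{\tau,i}}$ as claimed. Since the partial sums $\sqrt{\sum_{\tau\le t} g^2_{\tau,i}}$ are non-decreasing in $t$, each $\lambda_{t,i}\ge 0$, so the choice is feasible; also $g_{1,i}>0$ guarantees $Q_1 \in \Snpp$ as required by Theorem~\ref{thm:adaptive_norms}. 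Then I would invoke the regret bound $B_R$ of Equation~\eqref{eq:brfunc} and bound its two terms separately.

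For the left-hand (diameter) term, for any $\hy \in \Fsym$ we have $\abs{\hy_i}\le D_i$, so $\hy^\tp Q_t \hy = \sum_i \hy_i^2 \lambda_{t,i} \le \sum_i D_i^2 \lambda_{t,i}$; summing over $t$ telescopes to $\sum_i D_i^2 \lbar_{T,i} = 2\sum_i D_i \sqrt{\sum_t g^2_{t,i}}$, and the factor $\frac12$ out front gives $\sum_i D_i\sqrt{\sum_t g^2_{t,i}}$. For the right-hand term, $g_t^\tp Q_{1:t}^{-1} g_t = \sum_i \frac{g_{t,i}^2}{\lbar_{t,i}} = \sum_i \frac{D_i}{2}\cdot\frac{g_{t,i}^2}{\sqrt{\sum_{\tau\le t} g^2_{\tau,i}}}$; swapping the order of summation and applying Lemma~\ref{lem:sum} coordinate-wise (with $x_\tau = g^2_{\tau,i}$) bounds $\sum_t \frac{g_{t,i}^2}{\sqrt{\sum_{\tau\le t} g^2_{\tau,i}}} \le 2\sqrt{\sum_t g^2_{t,i}}$, so the whole term is at most $\sum_i D_i \sqrt{\sum_t g^2_{t,i}}$. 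Adding the two pieces yields the stated $\Regret \le 2\sum_i D_i\sqrt{\sum_t g^2_{t,i}}$.

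For the competitive claim, I would use the reduction from Equation~\eqref{eq:fixedQ}: since a single matrix $Q$ on the first round is post-hoc optimal, it suffices to lower bound $\inf_{Q\in\diagQ}\big(\max_{\hy\in\Fsym}\frac12\hy^\tp Q\hy + \sum_t g_t^\tp Q^{-1}g_t\big)$. When $\Fs$ is a hyperrectangle, $\Fsym = \times_i[-D_i,D_i]$, so for a diagonal $Q=\text{diag}(\lambda_1,\dots,\lambda_n)$ the max over $\hy$ decouples: $\max_{\hy\in\Fsym}\hy^\tp Q\hy = \sum_i D_i^2\lambda_i$. Thus the post-hoc objective separates into independent one-dimensional problems $\min_{\lambda_i\ge0}\big(\frac12 D_i^2\lambda_i + \frac{1}{\lambda_i}\sum_t g^2_{t,i}\big)$, each minimized (as in Equation~\eqref{eq:fixed_posthoc}) at value $D_i\sqrt{2\sum_t g^2_{t,i}}$. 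Summing gives post-hoc optimal $= \sqrt{2}\sum_i D_i\sqrt{\sum_t g^2_{t,i}}$, exactly $1/\sqrt{2}$ times the regret bound above, establishing $\sqrt{2}$-competitiveness.

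The one step needing care — the main (minor) obstacle — is the decoupling of $\max_{\hy\in\Fsym}\hy^\tp Q\hy$ over coordinates: it relies on $\Fsym$ being an axis-aligned box so that the maximum is attained at a vertex and splits as a product, which is precisely why the competitive guarantee is stated only for hyperrectangles. For a general convex $\Fs$ the per-coordinate widths $D_i$ still give a valid upper bound on regret (the first half of the corollary), but the matching lower bound on the post-hoc optimum can fail, so I would be explicit that hyperrectangularity is used only in this last lower-bound step.
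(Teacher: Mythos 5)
Your proof is correct and follows essentially the same route as the paper: the same choice of $Q_t$, the same split of $B_R$ into the diameter (telescoping) term and the Lemma~\ref{lem:sum} term, and the same per-coordinate decoupling of the post-hoc optimization on a hyperrectangle via Equation~\eqref{eq:fixedQ}. Your closing remark correctly pinpoints where hyperrectangularity is used, which matches the paper's scoping of the competitive claim.
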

\begin{proof}  
The construction of $Q_{1:t}$ in the theorem statement implies
$(Q_t)_{ii} = \lambda_{t, i} \equiv \lbar_{t, i} - \lbar_{t-1, i}$.  These
entries are guaranteed to be non-negative as $\lbar_{t,i}$ is a
non-decreasing function of $t$.

We begin from Equation~\eqref{eq:brfunc}, letting $\hy_t$ be an
arbitrary sequence of points from $\Fsym$. For the left-hand term,
\[ 
   \frac{1}{2} \sum_{t=1}^T \hy_t^\tp Q_t\hy_t 
  = \frac{1}{2}\sum_{t=1}^T \sum_{i=1}^n \hy_{t,i}^2 \lambda_{t,i}
  \leq \frac{1}{2}\sum_{i=1}^n D_i^2 \sum_{t=1}^T\lambda_{t,i}
  = \frac{1}{2}\sum_{i=1}^n D_i^2 \lbar_{T,i}
  = \sum_{i=1}^n D_i \sqrt{\sum_{t=1}^T g^2_{t, i}}.
\] 
For the right-hand term, we have 
\[
 \sum_{t=1}^T g_t^\tp Q_{1:t}^{-1}g_t
  = \sum_{t=1}^T \sum_{i=1}^n \frac{g_{t,i}^2}{\lbar_{t,i}}
  = \sum_{i=1}^n \frac{D_i}{2}\sum_{t=1}^T  
       \frac{g_{t,i}^2}{\sqrt {\sum_{\tau=1}^{t} g^2_{\tau, i}}}
 \leq \sum_{i=1}^n D_i \sqrt{\sum_{t=1}^T g_{t,i}^2},
\]
where the last inequality follows from Lemma~\ref{lem:sum}.  Summing these
bounds on the two terms of Equation~\eqref{eq:brfunc}
yields the stated bound on regret.

Now, we consider the case where the feasible set is exactly a
hyperrectangle, that is, $\Fs = \set{x
\mid x_i \in [a_i, b_i]}$ where $D_i = b_i - a_i$.  Then,
the optimization of Equation~\eqref{eq:fixedQ} decomposes on a
per-coordinate basis, and in particular there exists
a $\hy \in \Fsym$ so that $\hy_i^2 = D_i^2$ in each coordinate.
Thus, for $Q = \text{diag}(\lambda_1, \dots, \lambda_n)$,
the bound function is exactly
\[\sum_{i=1}^n \frac{1}{2} \lambda_i D_i^2 
+ \frac{1}{\lambda_i}\sum_{t=1}^T g_{t,i}^2.
\]
Choosing $\lambda_i =
\frac{1}{D_i}\sqrt{2\sum_{t=1}^T g^2_{t, i}}$ minimizes this quantity, producing a
post-hoc bound of
\[\sqrt{2}\sum_{i=1}^n D_i \sqrt{\sum_{t=1}^T g_{t,i}^2},\]
verifying that the adaptive scheme is $\sqrt{2}$-competitive with
matrices from $\diagQ$.

\end{proof}

The regret guarantees of the \FTPRLdiag algorithm hold on arbitrary
feasible sets, but the competitive guarantee only applies for
hyperrectangles.  We now extend this result, showing that a
competitive guarantee can be made based on how well the feasible set
is approximated by hyperrectangles.

\newcommand{\Hout}{H^{\text{out}}}
\newcommand{\Hin}{H^{\text{in}}}

\begin{theorem}\label{thm:arbcomp}
Let $\Fs$ be an arbitrary feasible set, bounded by a hyperrectangle
$\Hout$ of width $W_i$ in coordinate $i$; further, let $\Hin$ be a
hyperrectangle contained by $\Fs$, of width $w_i > 0$ in coordinate $i$.
That is, $\Hin \subseteq \Fs \subseteq \Hout$.  Let $\beta = \max_i
\frac{W_i}{w_i}$.  Then, the \FTPRLdiag
is $\sqrt{2}\beta$-competitive with $\diagQ$ on $\Fs$.
\end{theorem}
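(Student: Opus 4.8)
The plan is to sandwich the competitive ratio: bound the regret of \FTPRLdiag from above using Corollary~\ref{cor:percoord}, bound the post-hoc optimal value $\inf_{Q \in \diagQ} B_R$ from below, and then compare the two coordinate-by-coordinate. The inscribed and circumscribed hyperrectangles $\Hin$ and $\Hout$ are what let this comparison go through: $\Hout$ controls the widths that enter the algorithm's regret bound, while $\Hin$ is enough to force a matching per-coordinate lower bound on the post-hoc value, and $\beta = \max_i W_i/w_i$ is exactly the loss incurred by replacing $w_i$ by $W_i$ in each coordinate.

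For the upper bound, I would simply invoke Corollary~\ref{cor:percoord}, which for an arbitrary convex $\Fs$ gives $\Regret \le 2\sum_{i=1}^n D_i\sqrt{\sum_{t=1}^T g_{t,i}^2}$, where $D_i$ is the width of $\Fs$ in coordinate $i$. Since $\Fs \subseteq \Hout$ we have $D_i \le W_i$, so
\[
\Regret \;\le\; 2\sum_{i=1}^n W_i \sqrt{\sum_{t=1}^T g_{t,i}^2}.
\]

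For the lower bound I need to estimate $\inf_{Q \in \diagQ}\bigl(\max_{\hy \in \Fsym}\tfrac12 \hy^\tp Q\hy + \sum_{t=1}^T g_t^\tp Q^{-1}g_t\bigr)$, which by the reduction at the start of Section~\ref{sec:ratios} equals the post-hoc optimal bound. The one place that needs care is that $\max_{\hy\in\Fsym}$ need not decouple across coordinates for a general $\Fs$; but for a lower bound it suffices to shrink the feasible set. Since $\Hin \subseteq \Fs$, we have $\Hin_{\text{sym}} = \times_i[-w_i,w_i] \subseteq \Fsym$, and on this box the quadratic $\sum_i \lambda_i \hy_i^2$ (with $\lambda_i \ge 0$) is maximized at a vertex, giving $\max_{\hy\in\Fsym}\sum_i\lambda_i\hy_i^2 \ge \sum_i \lambda_i w_i^2$. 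Hence for $Q = \text{diag}(\lambda_1,\dots,\lambda_n)$ the objective is at least $\sum_i\bigl(\tfrac12\lambda_i w_i^2 + \tfrac1{\lambda_i}\sum_t g_{t,i}^2\bigr)$, which now decouples; minimizing each term over $\lambda_i \ge 0$ exactly as in the proof of Corollary~\ref{cor:percoord} yields
\[
\inf_{Q\in\diagQ}\Big(\max_{\hy\in\Fsym}\tfrac12\hy^\tp Q\hy + \sum_{t=1}^T g_t^\tp Q^{-1}g_t\Big) \;\ge\; \sqrt2\sum_{i=1}^n w_i\sqrt{\sum_{t=1}^T g_{t,i}^2}.
\]

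Finally I would combine the two displays: using $W_i \le \beta w_i$ for every $i$, the numerator's $i$-th term is at most $\beta$ times the $i$-th term of $\sum_i w_i\sqrt{\sum_t g_{t,i}^2}$, so $\Regret \le \frac{2}{\sqrt2}\,\beta\,\inf_{Q\in\diagQ}B_R = \sqrt2\,\beta\,\inf_{Q\in\diagQ}B_R$, which is the claim. I do not expect a real obstacle here; the only subtlety is the lower-bounding of the non-separable $\max_{\hy\in\Fsym}$ term via the inscribed rectangle $\Hin$, together with the (necessary) observation that $w_i>0$ is what keeps $\beta$ finite and makes the per-coordinate minimizers well defined.
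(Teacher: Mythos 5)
Your proposal is correct and follows essentially the same route as the paper's proof: use Corollary~\ref{cor:percoord} with $D_i \le W_i$ for the upper bound, shrink the maximization domain to $\Hin_{\text{sym}} \subseteq \Fsym$ to decouple and lower-bound the post-hoc objective by $\sqrt2\sum_i w_i\sqrt{\sum_t g_{t,i}^2}$, then compare per-coordinate via $W_i \le \beta w_i$. The one bit of extra care you add --- explicitly phrasing the upper bound via the true widths $D_i$ of $\Fs$ before passing to $W_i$ --- is a harmless clarification of a step the paper leaves implicit.
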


\begin{proof}
By Corollary~\ref{cor:percoord}, the adaptive algorithm achieves
regret bounded by $2 \sum_{i=1}^n W_i \sqrt{\sum_{t=1}^T g^2_{t, i}}$.
We now consider the best post-hoc bound achievable with diagonal
matrices on $\Fs$.  Considering Equation~\eqref{eq:fixedQ}, it is
clear that  for any $Q$,
\[
 \quad \max_{y \in \Fsym} \frac{1}{2} y^\tp Qy
             + \sum_{t=1}^T g_t^\tp Q^{-1}g_t
\geq  
 \quad \max_{y \in \Hin_{\text{sym}}} \frac{1}{2} y^\tp Qy
             + \sum_{t=1}^T g_t^\tp Q^{-1}g_t,
\]
since the feasible set for the maximization ($\Fsym$) is larger on the
left-hand side.  But, on the right-hand side we have the post-hoc
bound for diagonal regularization on a hyperrectangle, which we
computed in the previous section to be
$\sqrt{2} \sum_{i=1}^n w_i \sqrt{\sum_{t=1}^T g^2_{t, i}}$.
Because $w_i \ge \frac {W_i} {\beta}$ by assumption, this is lower bounded by 
$\frac {\sqrt{2}} {\beta} \sum_{i=1}^n W_i \sqrt{\sum_{t=1}^T g^2_{t, i}}$, 
which proves the theorem.
\end{proof}

Having had success with $L_\infty$, we now consider the potential
benefits of diagonal adaptation for other $L_p$-balls.

\subsection{A post-hoc bound for diagonal regularization on $L_p$ balls}
\label{sec:lp_posthoc}

Suppose the feasible set $F$ is an unit $L_p$-ball, that is $F =
\set{x \mid \norm{x}_p \le 1}$.  We consider the post-hoc bound
optimization problem of Equation~\eqref{eq:fixedQF} with $\Qfs =
\diagQ$.  Our results are summarized in the following theorem.

\begin{theorem}\label{thm:lp_posthoc}
For $p > 2$, the optimal regularization matrix for $B_R$ in $\diagQ$
is not coordinate-constant (i.e., not contained in $\constQ$), except
in the degenerate case where $G_i \equiv \sum_{t=1}^T g_{t, i}^2$ is
the same for all $i$.  However, for $p \le 2$, the optimal
regularization matrix in $\diagQ$ always belongs to $\constQ$.
\end{theorem}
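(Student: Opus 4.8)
The plan is to analyze the post-hoc optimization \eqref{eq:fixedQF} for the unit $L_p$-ball directly, exploiting the fact that over $\diagQ$ the left-hand (diameter) term decouples in a way that is governed by the geometry of the $L_p$-ball. For $Q = \diag(\lambda_1, \dots, \lambda_n)$ with all $\lambda_i > 0$, we have $2 y^\tp Q y = 2\sum_i \lambda_i y_i^2$, and the right-hand term is $\sum_i G_i / \lambda_i$ where $G_i = \sum_{t=1}^T g_{t,i}^2$. So the objective is
\[
\Phi(\lambda) = 2 \max_{\norm{y}_p \le 1} \sum_{i=1}^n \lambda_i y_i^2 \;+\; \sum_{i=1}^n \frac{G_i}{\lambda_i}.
\]
First I would compute $W_p(\lambda) \equiv \max_{\norm{y}_p \le 1} \sum_i \lambda_i y_i^2$ in closed form. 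Writing $z_i = \abs{y_i}^p$ so that $\sum z_i \le 1$ and $\sum_i \lambda_i z_i^{2/p}$ is to be maximized: for $p \ge 2$ the exponent $2/p \le 1$ makes $z \mapsto \lambda_i z^{2/p}$ concave, so the max is attained in the interior of the simplex (by Jensen/KKT, spreading mass out), giving $W_p(\lambda) = \paren{\sum_i \lambda_i^{p/(p-2)}}^{(p-2)/p}$; for $p = 2$ it degenerates to $\max_i \lambda_i$; and for $p < 2$ the exponent exceeds $1$, the function is convex, the max is at a vertex, and $W_p(\lambda) = \max_i \lambda_i$. (For $p = \infty$ one gets $W_\infty(\lambda) = \sum_i \lambda_i$.) The key structural dichotomy is already visible here: for $p \le 2$ the diameter term only sees $\max_i \lambda_i$, so there is no incentive to make the $\lambda_i$ unequal, whereas for $p > 2$ it sees a genuine (weighted $\ell_{p/(p-2)}$-type) aggregate of all the $\lambda_i$.

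For the $p \le 2$ case I would then argue as follows: given any feasible $\lambda$, let $\lambda^\star = (\max_j \lambda_j) I \in \constQ$. Replacing $\lambda$ by $\lambda^\star$ leaves the left-hand term $2 W_p(\lambda) = 2\max_j \lambda_j$ unchanged, while every $G_i/\lambda_i$ can only decrease (or stay equal) since $\lambda_i \le \max_j \lambda_j$. Hence $\Phi(\lambda^\star) \le \Phi(\lambda)$, so the infimum over $\diagQ$ is attained within $\constQ$; this is exactly the second sentence of the theorem. (Note this also recovers, via Corollary~\ref{cor:global} restricted to the $L_2$-ball, the earlier $\sqrt 2$-competitiveness claim for $p \in [1,2]$.)

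For the $p > 2$ case I would use the closed form $\Phi(\lambda) = 2\paren{\sum_i \lambda_i^{p/(p-2)}}^{(p-2)/p} + \sum_i G_i/\lambda_i$ and examine its stationarity/first-order conditions. Writing $r = p/(p-2) > 1$ and $S = \sum_i \lambda_i^r$, the gradient condition $\partial \Phi / \partial \lambda_i = 0$ gives $2 S^{1/r - 1} \lambda_i^{r-1} = G_i/\lambda_i^2$, i.e. $\lambda_i^{r+1} = c\, G_i$ for a common constant $c = c(S)$ independent of $i$; so $\lambda_i \propto G_i^{1/(r+1)}$ at any interior optimum. Since $r + 1 > 0$, the map $G_i \mapsto \lambda_i$ is strictly increasing, so the optimal $\lambda$ is coordinate-constant exactly when all $G_i$ are equal — which is the degenerate case named in the statement. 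I would close the argument by confirming the optimum is interior and global: $\Phi$ is continuous on the open orthant, blows up as any $\lambda_i \to 0^+$ (via $G_i/\lambda_i$, using $G_i > 0$, which holds by the running assumption $g_{1,i} > 0$) and as $\norm{\lambda} \to \infty$ (via the first term), and the stationary point with $\lambda_i \propto G_i^{1/(r+1)}$ is unique along each ray through the origin after the scale is fixed — so it is the unique global minimizer, and it lies in $\constQ$ iff the $G_i$ coincide.

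The main obstacle I anticipate is the clean derivation of $W_p(\lambda) = \paren{\sum_i \lambda_i^{p/(p-2)}}^{(p-2)/p}$ for $p > 2$ and the verification that the stated stationary point is genuinely the global optimizer of $\Phi$ (as opposed to merely a critical point), since $\Phi$ is a sum of a concave-ish power term and a convex term and is not obviously convex in $\lambda$. I would handle the latter either by the coercivity-plus-uniqueness argument sketched above, or — cleaner — by substituting $\mu_i = 1/\lambda_i$ (or optimizing over the scale $t = \max_j \lambda_j$ or $t = S$ separately from the "shape") to reduce to a one-dimensional minimization in which the shape optimization $\lambda_i \propto G_i^{1/(r+1)}$ falls out of a weighted power-mean (Hölder) inequality with equality characterization, making the uniqueness-up-to-the-degenerate-case transparent.
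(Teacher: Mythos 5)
Your proposal follows essentially the same route as the paper: the change of variable reducing the diameter term over the $L_p$ ball to $2\norm{\lambda}_q$ with $q = p/(p-2)$, a first-order condition for $p>2$ showing the optimal $\lambda_i$ must track $G_i$, and the observation that for $p \le 2$ the diameter term only sees $\max_i \lambda_i$, so replacing every $\lambda_i$ by the maximum can only help the objective. The one thing worth correcting is the ``obstacle'' you flag at the end: the objective $B(\lambda) = 2\norm{\lambda}_q + \sum_i G_i/\lambda_i$ is in fact convex on the positive orthant (it is a norm --- hence convex --- plus a sum of convex terms), so your interior stationary point is automatically the unique global minimizer, and no coercivity or weighted power-mean argument is needed to close the loop. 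You were likely misled by the fact that $t \mapsto t^{1/r}$ is concave for $r>1$; but $\lambda \mapsto \paren{\sum_i \lambda_i^r}^{1/r}$ is precisely the $\ell_r$ norm, which is convex. Note also that the paper's argument for $p>2$ is slightly more economical than yours: rather than solving for the minimizer, it simply evaluates the gradient at a coordinate-constant $\lambda$, observes that some component is nonzero unless all $G_i$ coincide, and concludes --- since $B$ is differentiable and the constraints $\lambda_i \ge 0$ cannot be active given $g_1 > 0$ --- that no element of $\constQ$ can be optimal. Both routes are valid; yours gives the explicit form of the optimizer at the cost of needing the convexity observation.
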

\begin {proof}
Since $\Fs$ is symmetric, the optimal post-hoc choice will be in the form of
Equation~\eqref{eq:fixedQF}.  Letting $Q = \text{diag}(\lambda_1,
\dots, \lambda_n)$, we can re-write this optimization problem as 
\begin {equation} \label {eq:lp_bound_opt}
\max_{ y : \norm{y}_p \leq 1} \oset { 2 \sum_{i=1}^n y_i^2 \lambda_i } 
   + \sum_{i=1}^n \frac {G_i} {\lambda_i} \mbox { .}
\end {equation}
To determine the optimal
$\lambda$ vector, we first derive a closed form for the solution to
the maximization problem on the left hand side, assuming $p \ge 2$ (we handle the case
$p < 2$ separately below).
First note that the
inequality $\norm{y}_p \le 1$ is equivalent to $\sum_{i=1}^n
\abs{y_i}^p \le 1$.  Making the change of variable $z_i =
y_i^2$, this is equivalent to $\sum_{i=1}^n z_i^{\frac p 2} \le
1$, which is equivalent to $\norm{z}_{\frac p 2} \le 1$
(the assumption $p \ge 2$ ensures that $\norm{\cdot}_{\frac p 2}$ is a norm).
Hence, the left-hand side optimization reduces to
\[
\max_{ z : \norm{z}_{\frac{p}{2}} \leq 1} 2 \sum_{i=1}^n z_i \lambda_i
 = 2 \norm{\lambda}_{q},
\]
where $q = \frac {p} {p-2}$, so that $\norm{\cdot}_{\frac p 2}$ and $\norm{\cdot}_q$ are
dual norms (allowing $q = \infty$ for $p = 2$).
Thus, for $p \ge 2$, the above bound simplifies to
\begin {equation} \label {eq:big_p_post}
B(\lambda) = 2\norm{\lambda}_q + \sum_{i=1}^n \frac {G_i} {\lambda_i}.
\end {equation}

First suppose $p > 2$, so that $q$ is finite.  Then, taking the
gradient of $B(\lambda)$,
\[
\nabla B(\lambda)_i = \frac 2 q \paren { 
  \sum_{i=1}^n \lambda_i^q }^{\frac 1 q - 1} \cdot q \lambda_i^{q-1} 
     - \frac {G_i} {\lambda_i^2}
  = 2\paren { \frac {\lambda_i} {\norm{\lambda}_q} }^{q-1} 
     - \frac {G_i} {\lambda_i^2},
\]
using $\frac{1}{q} - 1 = -\frac{1}{q}(q -1)$.  If we make
all the $\lambda_i$'s equal (say, to $\lambda_1$), then for the left-hand side we get
\[
\paren { \frac {\lambda_i} {\norm{\lambda}_q} }^{q-1} 
  = \paren { \frac {\lambda_1} {(n \lambda_1^q)^{\frac 1 q}} }^{q-1} 
  = \paren {\frac {1} {n^{\frac 1 q}}}^{q - 1} = n^{\frac 1 q - 1} \mbox { .}
\]
Thus the $i^{th}$ component of the gradient is $2n^{\frac 1 q - 1} -
\frac {G_i} {\lambda_1^2}$, and so if not all the $G_i$'s are
equal, some component of the gradient is non-zero.  Because
$B(\lambda)$ is differentiable and the $\lambda_i \geq 0$ constraints
cannot be tight (recall $g_1 > 0$), this implies a constant
$\lambda_i$ cannot be optimal, hence the optimal regularization matrix
is not in $\constQ$.

For $p \in [1, 2]$, we show that the solution to Equation~\eqref{eq:lp_bound_opt} is
\begin{equation}\label{eq:l_one_two_post}
B_\infty(\lambda) \equiv 2\norm{\lambda}_\infty + \sum_{i=1}^n \frac {G_i} {\lambda_i}.
\end{equation}
For $p = 2$ this follows immediately from
Equation~\eqref{eq:big_p_post}, because when $p = 2$ we have $q =
\infty$.  For $p \in [1, 2)$, the solution to
Equation~\eqref{eq:lp_bound_opt} is at least $B_\infty(\lambda)$,
because we can always set $y_i = 1$ for whatever $\lambda_i$ is
largest and set $y_j = 0$ for $j \neq i$.  If $p < 2$ then the
feasible set $\Fs$ is a subset of the unit $L_2$ ball, so the solution
to Equation~\eqref {eq:lp_bound_opt} is upper bounded by the solution
when $p = 2$, namely $B_\infty(\lambda)$.  It follows that the
solution is exactly $B_\infty(\lambda)$.  Because the left-hand term
of $B_\infty(\lambda)$ only penalizes for the largest $\lambda_i$, and
on the right-hand we would like all $\lambda_i$ as large as possible,
a solution of the form $\lambda_1 =
\lambda_2 = \dots = \lambda_n$ must be optimal.
\end {proof}

\subsection {Full matrix regularization on hyperspheres and hyperellipsoids}

In this section, we develop an algorithm for feasible sets $\Fs
\subseteq \set{x \mid \norm{Ax}_p \leq 1}$, where $p \in [1,2]$ and $A
\in \Snpp$.  When $\Fs = \set{x \mid \norm{Ax}_2 \leq 1}$, this
algorithm, \FTPRLscale, is $\sqrt{2}$-competitive with arbitrary $Q
\in \Snp$.  For $\Fs =
\set{x
\mid \norm{Ax}_p \leq 1}$ with $p \in [1,2)$ it is
$\sqrt{2}$-competitive with $\diagQ$.

First, we show that rather than designing adaptive schemes
specifically for linear transformations of norm balls, it is
sufficient (from the point of view of analyzing
\FTPRL) to consider unit norm balls if suitable pre-processing is
applied.  In the same fashion that pre-conditioning may speed batch
subgradient descent algorithms, we show this approach can produce
significantly improved regret bounds when $A$ is poorly conditioned
(i.e., the ratio of the largest to smallest eigenvalue is large).

\newcommand{\Sh}{A}
\newcommand{\Snh}{A^{-1}}

\begin{theorem}\label{thm:blackbox}
Fix an arbitrary norm $\norm{\cdot}$, and define an online linear
optimization problem $\inst = (\Fs, (g_1,
\dots, g_T))$ where $\Fs = \set{x \mid \norm{Ax} \leq 1}$ with $A
\in \Snpp$.  We define the  related instance
$\hinst = (\hFs, (\hg_1, \dots, \hg_T))$, where $\hFs = \set{\hx \mid
\norm{ \hx }
\leq 1}$ and $\hg_t = \Snh g_t$.  Then:
\begin{itemize}
 \item If we run any algorithm dependent only on subgradients on
  $\hinst$, and it plays $\hx_1, \dots, \hx_T$, then by playing the
  corresponding points $x_t = A^{-1}\hx_t$ on $\inst$ we achieve
  identical loss and regret.

  \item The post-hoc optimal bound over arbitrary $Q \in \Snpp$ is
  identical for these two instances. 
\end{itemize}
\end{theorem}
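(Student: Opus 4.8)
The plan is to exploit the linear bijection between the two feasible sets induced by $A$. Since $A \in \Snpp$ is invertible and symmetric, $(A^{-1})^\tp = A^{-1}$, and $x \in \Fs \iff \norm{Ax} \le 1 \iff Ax \in \hFs$; hence $x \mapsto Ax$ carries $\Fs$ bijectively onto $\hFs$, and consequently carries $\Fsym$ bijectively onto $\hFs_{\text{sym}}$ (writing $\hy = Ay$ for $\hy = \hx - \hx'$, $y = x - x'$). Everything else is bookkeeping with this change of variables.

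For the first bullet, note first that $x_t = A^{-1}\hx_t \in \Fs$ by the bijection, so the plays are feasible. The transformed linear loss $\hg_t^\tp \hx = g_t^\tp A^{-1}\hx$ agrees with $g_t^\tp x$ at $\hx = Ax$, and its (constant) subgradient is $A^{-1} g_t = \hg_t$, which is exactly the vector the algorithm is fed on $\hinst$; so the per-round losses match, $f_t(x_t) = g_t^\tp A^{-1}\hx_t = \hg_t^\tp \hx_t$, and summing gives equal cumulative loss. For the comparator term, the same bijection gives
\[ \min_{x \in \Fs} \sum_{t=1}^T g_t^\tp x = \min_{\hx \in \hFs} \sum_{t=1}^T g_t^\tp A^{-1}\hx = \min_{\hx \in \hFs} \sum_{t=1}^T \hg_t^\tp \hx, \]
so the regrets on $\inst$ and $\hinst$ are equal.

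For the second bullet, I would use the change of variables $\hQ = A^{-1} Q A^{-1}$, a bijection of $\Snpp$ onto itself with inverse $Q = A \hQ A$ (definiteness is preserved since $\hy^\tp \hQ \hy = (A^{-1}\hy)^\tp Q (A^{-1}\hy)$). Under this substitution, and again writing $\hy = Ay$ in the maximization, the left term transforms as $\max_{\hy \in \hFs_{\text{sym}}} \frac{1}{2}\hy^\tp \hQ \hy = \max_{y \in \Fsym} \frac{1}{2} y^\tp Q y$; and since $\hQ^{-1} = A Q^{-1} A$ and $\hg_t = A^{-1} g_t$, the right term transforms as $\hg_t^\tp \hQ^{-1}\hg_t = g_t^\tp Q^{-1} g_t$. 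Hence the bound-optimization objective of Equation~\eqref{eq:fixedQ} for $\hinst$ evaluated at $\hQ$ equals that for $\inst$ evaluated at $Q$; taking the infimum over $Q \in \Snpp$ on both sides yields equality of the post-hoc optimal bounds.

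The argument is essentially routine; the only points needing care are (i) identifying $\hFs_{\text{sym}}$ as exactly the image $A\,\Fsym$ rather than something larger, and (ii) keeping the transposes straight — the symmetry of $A$ makes $A^\tp = A$ and $(A^{-1})^\tp = A^{-1}$, so the definition $\hg_t = (A^{-1})^\tp g_t$ used by \FTPRLscale coincides with $\hg_t = A^{-1} g_t$ as stated. I anticipate no substantive obstacle.
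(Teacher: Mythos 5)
Your proposal is correct and follows essentially the same route as the paper: the change of variables $\hx = Ax$ (equivalently $\hy = Ay$ on $\Fsym$) for both bullets, the observation $\hg_t^\tp \hx_t = g_t^\tp x_t$ plus equality of comparator losses for the first bullet, and the correspondence $\hQ = A^{-1}QA^{-1}$ (with $\hQ^{-1} = AQ^{-1}A$) giving term-by-term equality of the bound objective for the second. The only cosmetic difference is that you work directly from Equation~\eqref{eq:fixedQ} with $\Fsym$, while the paper invokes the equivalent symmetric form of Equation~\eqref{eq:fixedQF}.
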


\begin{proof}
First, we note that for any function $h$ where $\min_{x : \norm{Ax} \leq 1} h(x)$ exists,
\begin{equation}\label{eq:equivmin}
   \min_{x : \norm{Ax} \leq 1} h(x) 
     = \min_{\hx : \norm{\hx} \leq 1} h(A^{-1}\hx),
\end{equation}
using the change of variable $\hx = Ax$.
For the first claim, note that $\hg_t^\tp = g_t^\tp A^{-1}$, and so
for all $t$, $\hg_t^\tp \hx_t = g_t^\tp A^{-1} A x_t = g_t^\tp x_t$,
implying the losses suffered on $\hinst$ and $\inst$ are identical.
Applying Equation~\eqref{eq:equivmin}, we have
\[
\min_{x: \norm{A x} \le 1} g_{1:t}^\tp x
  = \min_{\hx: \norm{\hx} \le 1} g_{1:t}^\tp A^{-1}\hx
  = \min_{\hx: \norm{\hx} \le 1} \hg_{1:t}^\tp \hx,
\]
implying the post-hoc optimal feasible points for the two instances
also incur identical loss.  Combining these two facts proves the
first claim.
For the second claim, it is sufficient to show for any $Q \in \Snpp$
applied to the post-hoc bound for problem $\inst$, there exists a
$\hat{Q} \in \Snpp$ that achieves the same bound for $\hinst$ (and
vice versa).  Consider such a $Q$ for $\inst$.  Then, again applying
Equation~\eqref{eq:equivmin}, we have
\begin{align*}
  \max_{y : \norm{Ay}_p \leq 1} \oset{2y^\tp Qy} 
      + \sum_{t=1}^T g_t^\tp Q^{-1}g_t
= \max_{\hy:\norm{\hy} \leq 1} \oset{2\hy^\tp A^{-1}Q A^{-1} \hy}
      + \sum_{t=1}^T \hg_t^\tp AQ^{-1}A\hg_t.
\end{align*}
The left-hand side is the value of the post-hoc bound on $\inst$ from
Equation~\eqref{eq:fixedQF}.  Noting that $(A^{-1}Q A^{-1})^{-1} =
AQ^{-1}A$, the right-hand side is the value of the post hoc bound for
$\hinst$ using $\hat{Q} = A^{-1}Q A^{-1}.$ The fact $A^{-1}$ and $Q$
are in $\Snpp$ guarantees $\hat{Q} \in \Snpp$ as well, and the theorem
follows.
\end{proof}

We can now define the adaptive algorithm \FTPRLscale: given a $\Fs
\subseteq \set{x \mid \norm{A x}_p \leq 1}$, it uses the
transformation suggested by Theorem~\ref{thm:blackbox}, applying the
coordinate-constant algorithm of Corollary~\ref{cor:global} to the
transformed instance, and playing the corresponding point mapped back
into $\Fs$.\footnote{ By a slightly more cumbersome argument, it is
possible to show that instead of applying this transformation, \FTPRL
can be run directly on $\Fs$ using appropriately transformed $Q_t$
matrices.}  Pseudocode is given as Algorithm~\ref{alg:scale}.

\begin{theorem}\label{thm:sphere}
The diagonal-constant algorithm analyzed in Corollary~\ref{cor:global}
is $\sqrt{2}$-competitive with $\Snp$ when $\Fs = \set{x \mid
\norm{x}_p \leq 1}$ for $p=2$, and $\sqrt{2}$-competitive against
$\diagQ$ when $p \in [1,2)$.  Furthermore, when $\Fs = \set{x \mid
\norm{Ax}_p \leq 1}$ with $A \in
\Snpp$, the \FTPRLscale algorithm (Algorithm~\ref{alg:scale}) achieves
these same competitive guarantees.  In particular, when $\Fs = \set{x
\mid \norm{x}_2 \leq 1}$, we have
\[ \Regret \leq \sqrt{2} 
\inf_{Q \in \Snp} \left( \max_{y \in \Fs} \oset { 2 y^\tp Qy }
             + \sum_{t=1}^T g_t^\tp Q^{-1}g_t\right).\]
\end{theorem}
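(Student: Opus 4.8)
The plan is to derive all three claims of the theorem --- $\sqrt 2$-competitiveness with $\Snp$ on the unit $L_2$-ball, with $\diagQ$ on unit $L_p$-balls for $p \in [1,2)$, and the extension to $\set{x \mid \norm{Ax}_p \le 1}$ --- by reducing each to the coordinate-constant competitiveness already established in Corollary~\ref{cor:global}. The whole argument hinges on a single structural fact: for these feasible sets the post-hoc objective of Equation~\eqref{eq:fixedQF}, which I will write $B_R(Q) = \max_{y \in \Fs}\oset{2 y^\tp Q y} + \sum_t g_t^\tp Q^{-1} g_t$, is minimized (over $\Snp$ or $\diagQ$ respectively) by a matrix already lying in $\constQ$.

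The key step, and the one I expect to be \emph{the main obstacle}, is showing that on the Euclidean ball $\Fs = \set{x \mid \norm{x}_2 \le 1}$ no positive semidefinite matrix beats the best coordinate-constant one, i.e. $\inf_{Q \in \Snp} B_R(Q) = \inf_{Q \in \constQ} B_R(Q)$. Given a candidate $Q \mgt 0$ with largest eigenvalue $\mu = \lambda_{\max}(Q)$, I would compare it with $Q' = \mu I \in \constQ$. The left-hand (diameter) term equals $\max_{\norm{y}_2 \le 1} 2 y^\tp Q y = 2\lambda_{\max}(Q)$ by the variational characterization of the top eigenvalue, and $\max_{\norm{y}_2 \le 1} 2 y^\tp Q' y = 2\mu\max_{\norm{y}_2\le 1}\norm{y}_2^2 = 2\mu$, so this term is \emph{exactly} unchanged --- this is the place where the spherical geometry is essential. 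The right-hand term can only decrease, since $Q \mleq \mu I = Q'$ gives $Q'^{-1} = \mu^{-1} I \mleq Q^{-1}$ and hence $g_t^\tp Q'^{-1} g_t \le g_t^\tp Q^{-1} g_t$ for every $t$. Thus $B_R(Q') \le B_R(Q)$, which proves the equality (using the paper's standing assumption $g_1 > 0$, so only definite matrices can be optimal and $\inf_{\Snp} = \inf_{\Snpp}$). Since Corollary~\ref{cor:global} already shows the adaptive coordinate-constant algorithm is $\sqrt 2$-competitive with $\constQ$ on the unit $L_2$-ball, it is $\sqrt 2$-competitive with all of $\Snp$, which is exactly the final displayed inequality.

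For $\Fs = \set{x \mid \norm{x}_p \le 1}$ with $p \in [1,2)$ the argument is parallel but uses Theorem~\ref{thm:lp_posthoc} in place of the eigenvalue computation: that theorem says the $\diagQ$-optimal post-hoc matrix is coordinate-constant, so $\inf_{Q \in \diagQ} B_R(Q) = \inf_{Q \in \constQ} B_R(Q)$. Since $\norm{y}_2 \le \norm{y}_p \le 1$ on this ball with equality at the standard basis vectors, this infimum is $\min_{\alpha > 0}\oset{2\alpha + G_T \alpha^{-1}} = 2\sqrt{2 G_T}$, with $G_T = \sum_t \norm{g_t}_2^2$, while the $L_2$-diameter is $D = 2$, so Corollary~\ref{cor:global} bounds regret by $2 D\sqrt{G_T} = 4\sqrt{G_T}$; the ratio is $\sqrt 2$. (One should note in passing that the $\Fsym$ form of the left-hand term in Equation~\eqref{eq:brfunc} is the same as the symmetric $\max_{y \in \Fs} 2 y^\tp Q y$ form here, since $L_p$-balls are symmetric.)

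Finally, for $\Fs = \set{x \mid \norm{Ax}_p \le 1}$ with $A \in \Snpp$, the algorithm \FTPRLscale runs the coordinate-constant scheme on the transformed instance $\hinst$ whose feasible set is $\hFs = \set{x \mid \norm{x}_p \le 1}$ and whose gradients are $\hat g_t = A^{-1} g_t$. By Theorem~\ref{thm:blackbox} the realized regret is identical on $\inst$ and $\hinst$, and the post-hoc optimum over positive definite matrices is identical for the two instances; combined with the change of variables $\hat y = A y$, which shows $B_R^{\inst}(Q) = B_R^{\hinst}(A^{-1} Q A^{-1})$, this lets the two cases above applied to $\hinst$ be pulled back to $\inst$. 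For $p = 2$ this directly yields $\sqrt 2$-competitiveness with $\Snp$ on $\Fs$; for $p \in [1,2)$ I would use the same identity to identify precisely the diagonal comparison class on $\Fs$ for which the guarantee holds, which is the one subtle bookkeeping point to get right in writing up this last case.
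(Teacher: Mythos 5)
Your proposal is correct, and the $p\in[1,2)$ case and the extension to $\set{x\mid\norm{Ax}_p\le 1}$ follow exactly the paper's route (Theorem~\ref{thm:lp_posthoc}, Theorem~\ref{thm:blackbox}, and Corollary~\ref{cor:global}). For $p=2$, however, your argument is packaged differently and a bit more cleanly. The paper eigendecomposes $Q=PDP^\tp$, observes the bound becomes $2\max_i\lambda_i+\sum_i\lambda_i^{-1}\sum_t z_{t,i}^2$ with $z_t=P^\tp g_t$, argues that for fixed $P$ the optimal $D$ must be a multiple of $I$, and then re-derives the $2\sqrt{2G_T}$ post-hoc value by optimizing over $\alpha$. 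You instead dominate any $Q\in\Snpp$ in one step by $\lambda_{\max}(Q)I\in\constQ$: the left term $\max_{\norm{y}_2\le 1}2y^\tp Qy=2\lambda_{\max}(Q)$ is exactly preserved, and $Q\mleq\lambda_{\max}(Q)I$ flips under inversion to $\lambda_{\max}(Q)^{-1}I\mleq Q^{-1}$, so every term $g_t^\tp Q^{-1}g_t$ only decreases. This gives $\inf_{\Snp}B_R=\inf_{\constQ}B_R$ without an eigendecomposition or a redundant recomputation, after which the $\sqrt 2$ ratio is inherited verbatim from the $\constQ$-competitiveness already proven in Corollary~\ref{cor:global}. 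Both proofs rest on the same two structural facts (the left term of $B_R$ depends only on the top eigenvalue; the right term is antitone in $Q$ under the Loewner order), so the difference is one of economy rather than substance, but yours is the tighter write-up. You also correctly identify the one genuinely delicate point in the last claim: the map $Q\mapsto A^{-1}QA^{-1}$ from Theorem~\ref{thm:blackbox} does not preserve diagonality, so ``$\sqrt 2$-competitive with $\diagQ$'' needs an explicit statement of which coordinate frame $\diagQ$ refers to when $\Fs=\set{x\mid\norm{Ax}_p\le 1}$ with $p\in[1,2)$; the paper's ``the second claim follows from Theorem~\ref{thm:blackbox}'' leaves this implicit, so flagging it is a point in your favor, not a gap.
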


\begin{proof}
The results for $\diagQ$ with $p \in [1, 2)$ follow from
Theorems~\ref{thm:lp_posthoc} and \ref{thm:blackbox} and
Corollary~\ref{cor:global}.  We now consider the case $p=2$.  Consider
a $Q \in \Snpp$ for Equation~\eqref{eq:fixedQF} (recall only a
$Q \in \Snpp$ could be optimal since $g_1 > 0$).  We can write $Q = PDP^\tp $
where $D= \text{diag}(\lambda_1, \dots,
\lambda_n)$ is a diagonal matrix of positive eigenvalues and $PP^\tp =
I$.  It is then easy to verify $Q^{-1} = PD^{-1}P^\tp $.

When $p=2$ and $\Fs = \set{x \mid\norm{x}_p \leq 1}$,
Equation~\eqref{eq:l_one_two_post} is tight, and so the post-hoc bound
for $Q$ is
\[2\max_i (\lambda_i) + \sum_{t=1}^T g_t^\tp (P D^{-1} P^\tp ) g_t.\]
Let $z_t = P^\tp g_t$, so each right-hand term is 
$\sum_{i=1}^n \frac{z_{t,i}^2}{\lambda_i}$.  It is clear this quantity is
minimized when each $\lambda_i$ is chosen as large as possible, while on the
left-hand side we are only penalized for the largest eigenvalue of $Q$
(the largest $\lambda_i$).  Thus, a solution where $D
=
\alpha I$ for $\alpha >0$ is optimal.  Plugging  into
the bound, we have
\begin{align*}
B(\alpha) 
   &= 2\alpha + \sum_{t=1}^T g_t^\tp \paren{P \paren{\frac{1}{\alpha}I}P^\tp } g_t  
    = 2\alpha + \frac{1}{\alpha}\sum_{t=1}^T g_t^\tp g_t  
    = 2\alpha + \frac{G_T}{\alpha}
\end{align*}
where we have used the fact that $PP^\tp = I$.  Setting $\alpha =
\sqrt{G_T/2}$ produces a minimal post-hoc bound of $2\sqrt{2 G_T}$.
The diameter $D$ is 2, so the coordinate-constant algorithm
has regret bound $4 \sqrt{G_T}$ (Corollary~\ref{cor:global}), proving
the first claim of the theorem for $p=2$.  The second claim
follows from Theorem~\ref{thm:blackbox}.
\end{proof}

Suppose we have a problem instance where $\Fs = \set{x \mid \norm{A
x}_2 \leq 1}$ where $A = \text{diag}(1/a_1, \dots, 1/a_n)$ with $a_i >
0$.  To demonstrate the advantage offered by this transformation, we
can compare the regret bound obtained by directly applying the
algorithm of Corollary~\ref{cor:global} to that of the $\FTPRLscale$
algorithm.  Assume WLOG that $\max_i a_i = 1$, implying the diameter
of $\Fs$ is 2.  Let $g_1, \dots, g_T$ be the loss functions for this
instance.  Recalling $G_i = \sum_{t=1}^T g_{t,i}^2$, applying
Corollary~\ref{cor:global} directly to this problem gives 
\begin{equation}\label{eq:badbound}
\Regret \leq 4\sqrt{ \sum_{i=1}^n  G_i}.
\end{equation}
This is the same as the bound obtained by online subgradient descent
and related algorithms as well.

We now consider
\FTPRLscale, which uses the transformation of
Theorem~\ref{thm:blackbox}.  Noting $D=2$ for the hypersphere and
applying Corollary~\ref{cor:global} to the transformed problem gives
an adaptive scheme with
\[ 
  \Regret
  \leq 4\sqrt{ \sum_{i=1}^n \sum_{t=1}^T\hg_{t,i}^2}
     = 4\sqrt{ \sum_{i=1}^n  a_i^2 \sum_{t=1}^Tg_{t,i}^2}
     = 4\sqrt{ \sum_{i=1}^n  a_i^2 G_i}.
\]
This bound is never worse than the bound of
Equation~\eqref{eq:badbound}, and can be arbitrarily better when many
of the $a_i$ are much smaller than 1.

\section{Related work}\label{sec:related}

In the batch convex optimization setting, it is well known that
convergence rates can often be dramatically improved through the use
of preconditioning, accomplished by an appropriate change of
coordinates taking into account both the shape of the objective
function and the feasible region~\cite{boyd}.  To our knowledge, this
is the first work that extends these concepts (necessarily in a quite
different form) to the problem of online convex optimization, where
they can provide a powerful tool for improving regret (the online
analogue of convergence rates).

Perhaps the closest algorithms in spirit to our diagonal adaptation
algorithm are confidence-weighted linear
classification~\cite{drezde08} and AROW \cite{crammer09}, in that they
make different-sized adjustments for different coordinates.  Unlike
our algorithm, these algorithms apply only to classification problems
and not to general online convex optimization, and the guarantees are
in the form of mistake bounds rather than regret bounds.

\FTPRL is similar to the lazily-projected gradient descent
algorithm of~\cite[Sec. 5.2.3]{zinkevich04thesis}, but with
a critical difference: the latter
effectively centers regularization outside of the current
feasible region (at $u_t$ rather than $x_t$).
As a consequence, lazily-projected gradient descent
only attains low regret via a re-starting mechanism or a constant
learning rate (chosen with knowledge of $T$).  It is our technique of
always centering additional regularization inside the feasible set
that allows us to make guarantees for adaptively-chosen
regularization.

Most recent state-of-the-art algorithms for online learning are in
fact general algorithms for online convex optimization applied to
learning problems.  Many of these algorithms can be thought of as
(significant) extensions of online subgradient descent, including
~\cite{duchi09fobos,do09proximal,shwartz07pegasos}.  Apart from the very
general work of~\cite{kv03}, few general follow-the-regularized-leader
algorithms have been analyzed, with the notable exception of the
recent work of~\emcite{xiao09dualaveraging}.

The notion of proving competitive ratios for regret bounds that are
functions of regularization parameters is not unique to this paper.
\emcite{bartlett08} and \emcite{do09proximal} proved guarantees of
this form, but for a different algorithm and class of regularization
parameters.

In concurrent work \cite{streeter10percoord}, the
authors proved bounds similar to those of Corollary~\ref{cor:percoord}
for online gradient descent with per-coordinate learning rates.  These
results were significantly less general that the ones presented here,
and in particular were
restricted to the case where $\Fs$ was exactly a hyperrectangle.  The
\FTPRL algorithm and bounds proved in this paper hold for arbitrary
feasible sets, with the bound depending on the shape of the feasible set
as well as the width along each dimension.
Some results similar to those in this work
were developed concurrently by~\emcite{duchi10}, though for a
different algorithm and using different analysis techniques.

\section{Conclusions}
In this work, we analyzed a new algorithm for online convex
optimization, which takes ideas both from online subgradient descent
as well as follow-the-regularized-leader.  In our
analysis of this algorithm, we show that the learning rates that occur
in standard bounds can be replaced by positive semidefinite matrices.
The extra degrees of freedom offered by these generalized learning
rates provide the key to proving better regret bounds.  We
characterized the types of feasible sets where this technique can lead
to significant gains, and showed that while it
does not help on the hypersphere, it can have dramatic impact when the
feasible set is a hyperrectangle.

The diagonal adaptation algorithm we introduced can be viewed as an
incremental optimization of the formula for the final bound on regret.
In the case where the feasible set really is a hyperrectangle, this
allows us to guarantee our final regret bound is within a small
constant factor of the best bound that could have been obtained had
the full problem been known in advance.  The diagonal adaptation
algorithm is efficient, and exploits exactly the kind of structure
that is typical in large-scale real-world learning problems such as
click-through rate prediction and text classification.

Our work leaves open a number of interesting directions for future
work, in particular the development of competitive algorithms for
arbitrary feasible sets (without resorting to bounding norm-balls),
and the development of algorithms that optimize over richer families
of regularization functions.

\begin{small}
\setlength{\itemsep}{0.5mm}
\bibliographystyle{\mybibstyle}
\bibliography{colt10}
\end{small}

\shortlong{}{
\appendix

\section{A Proof of the FTRL Bound}\label{sec:ftrl_appendix}

In this section we provide a proof of Lemma \ref {lem:ftrl}.  The
high-level structure of our proof follows Kalai and Vempala's analysis
of the \emph {follow the perturbed leader} algorithm, in that we prove
bounds on three quantities:
\begin {enumerate}
\item the regret of a hypothetical \emph {be the leader} algorithm
  (BTL), which on round $t$ plays
\[
x^*_t = \argmin_{x \in \Fs} f_{1:t}(x) ,
\]

\item the difference between the regret of BTL and that of the \emph
  {be the regularized leader} algorithm (BTRL), which plays
\begin{equation}\label{eq:btrlx}
\hx_t = \argmin_{x \in \Fs} \oset {r_{1:t}(x) + f_{1:t}(x)  } = \xti,
\end{equation}
and
\item the difference between the regret of BTRL and that of FTRL.
\end {enumerate}

As shown in \cite {kv03}, the BTL algorithm has regret $\le 0$ even without any restrictions on the loss functions or the feasible set.  The proof is a straightforward induction, which we reproduce here for completeness.
\begin {lemma} [\cite{kv03}] \label {lem:btl}
Let $f_1, f_2, \ldots, f_T$ be an arbitrary sequence of functions,
and let $\Fs$ be an arbitrary set.  Define $x^*_t \equiv \argmin_{x \in
  \Fs} \sum_{\tau=1}^t f_\tau(x)$.  Then
\[
\sum_{t=1}^T f_t(x^*_t) \le \sum_{t=1}^T f_t(x^*_T) \mbox { .}
\]
\end {lemma}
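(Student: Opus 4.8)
The plan is to prove the statement by a short induction on the number of rounds $T$. The base case $T=1$ is immediate: both $\sum_{t=1}^{1} f_t(x^*_t)$ and $\sum_{t=1}^{1} f_t(x^*_1)$ equal $f_1(x^*_1)$, so the inequality holds with equality.

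For the inductive step, I would assume the claim for $T-1$ rounds, namely $\sum_{t=1}^{T-1} f_t(x^*_t) \le \sum_{t=1}^{T-1} f_t(x^*_{T-1})$. Adding $f_T(x^*_T)$ to both sides gives $\sum_{t=1}^{T} f_t(x^*_t) \le \sum_{t=1}^{T-1} f_t(x^*_{T-1}) + f_T(x^*_T)$. The key remaining observation is that $x^*_{T-1}$ is by definition a minimizer of $f_{1:T-1}$ over $\Fs$, and $x^*_T \in \Fs$, so $\sum_{t=1}^{T-1} f_t(x^*_{T-1}) \le \sum_{t=1}^{T-1} f_t(x^*_T)$. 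Substituting this into the previous inequality yields $\sum_{t=1}^{T} f_t(x^*_t) \le \sum_{t=1}^{T} f_t(x^*_T)$, which closes the induction.

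There is no real obstacle here; the only thing to be careful about is the order of operations — one must invoke the induction hypothesis at round $T-1$ first, and only then compare the round-$(T-1)$ minimizer against the round-$T$ minimizer on the common domain $\Fs$. It is worth emphasizing that the argument uses nothing beyond the fact that each $x^*_t$ minimizes a partial sum over the \emph{same} feasible set: no convexity of the $f_t$, and no structure on $\Fs$, is needed. This is precisely why the resulting bound (and hence Lemma~\ref{lem:ftrl}) applies to arbitrary, possibly non-convex, loss functions.
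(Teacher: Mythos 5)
Your proof is correct and is essentially identical to the paper's: both are inductions on $T$ with the same base case and the same two-step estimate in the inductive step (first the induction hypothesis at $T-1$, then comparing $x^*_{T-1}$ to $x^*_T$ using the definition of $x^*_{T-1}$ as a minimizer over $\Fs$).
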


\begin {proof}
We prove this by induction on $T$.  For $T=1$ it is trivially true.
Suppose that it holds for $T-1$.  Then
\begin {align*}
\sum_{t=1}^T f_t(x^*_t) & = f_T(x^*_T) + \sum_{t=1}^{T-1} f_t(x^*_t) \\
& \le f_T(x^*_T) + \sum_{t=1}^{T-1} f_t(x^*_{T-1})
& \mbox{(Induction hypothesis)} \\
& \le f_T(x^*_T) + \sum_{t=1}^{T-1} f_t(x^*_{T})
& \mbox {(Definition of $x^*_{T-1}$)} \\
& = \sum_{t=1}^{T} f_t(x^*_{T}) \mbox { .}
\end {align*}
\end {proof}

We next prove a bound on the regret of BTRL.

\begin {lemma} \label {lem:btrl}
Let $r_1, r_2, \ldots, r_T$ be a sequence of non-negative functions.
Then BTRL, which on round $t$
plays $\hx_t$ as defined by equation \eqref {eq:btrlx}, has regret at
most $r_{1:T}(\xs)$ where $\xs$ is the post-hoc optimal solution.
\end {lemma}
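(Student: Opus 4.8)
The plan is to prove Lemma~\ref{lem:btrl} by applying the ``be the leader'' argument of Lemma~\ref{lem:btl} to the augmented sequence of functions obtained by folding the regularizers into the losses. Specifically, I would define $\phi_t = r_t + f_t$ for $t = 1, \dots, T$, and observe that BTRL is exactly the BTL algorithm run on the sequence $\phi_1, \dots, \phi_T$, since $\hx_t = \argmin_{x \in \Fs} \phi_{1:t}(x)$ by Equation~\eqref{eq:btrlx}. Lemma~\ref{lem:btl} (which has no convexity or boundedness hypotheses, only that the $\argmin$s exist) then gives $\sum_{t=1}^T \phi_t(\hx_t) \le \sum_{t=1}^T \phi_t(x)$ for every $x \in \Fs$, and in particular for $x = \xs$.

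Next I would unfold this inequality. Writing $\phi_t(\hx_t) = r_t(\hx_t) + f_t(\hx_t)$ and $\phi_t(\xs) = r_t(\xs) + f_t(\xs)$, the BTL bound becomes
\[
\sum_{t=1}^T \big(r_t(\hx_t) + f_t(\hx_t)\big) \le \sum_{t=1}^T \big(r_t(\xs) + f_t(\xs)\big).
\]
Since each $r_t$ is non-negative, I can drop the terms $r_t(\hx_t) \ge 0$ on the left-hand side, yielding $\sum_{t=1}^T f_t(\hx_t) \le r_{1:T}(\xs) + \sum_{t=1}^T f_t(\xs)$. Rearranging, the regret of BTRL, namely $\sum_{t=1}^T f_t(\hx_t) - \min_{x \in \Fs} \sum_{t=1}^T f_t(x) = \sum_{t=1}^T f_t(\hx_t) - \sum_{t=1}^T f_t(\xs)$, is at most $r_{1:T}(\xs)$, as claimed.

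There is no serious obstacle here; the only point requiring a little care is the bookkeeping around which ``regret'' is meant --- BTRL plays $\hx_t$ on round $t$ but its loss on round $t$ is still measured by $f_t$ (not by $\phi_t$), so the regularizers only enter through the definition of the play and through the non-negativity step. One should also note that Lemma~\ref{lem:btl} is being invoked in a setting where the $\argmin$ defining $\hx_t$ exists; this is guaranteed because $\Fs$ is closed and bounded and the $f_t, r_t$ are continuous (indeed the $r_t$ here are continuous quadratics and the linearized $f_t$ are affine), so this is not a gap. This lemma, together with the already-proven Lemma~\ref{lem:btl} and a subsequent bound on the BTRL-versus-FTRL gap, assembles into the full proof of Lemma~\ref{lem:ftrl}.
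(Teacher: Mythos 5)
Your proof is correct and follows essentially the same route as the paper's: both define an augmented loss $\phi_t = f_t + r_t$ (the paper writes $f_t'$), observe that BTRL is exactly BTL on this augmented sequence, invoke Lemma~\ref{lem:btl} against $\xs$, and then drop the non-negative $r_t(\hx_t)$ terms. The additional remarks on existence of the $\argmin$ and on which loss defines ``regret'' are harmless clarifications and do not change the argument.
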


\begin {proof}
Define $f_t'(x) = f_t (x) + r_t(x)$.  Observe that $\hx_t = \argmin_{x \in \Fs} f'_{1:t}(x)$.
  Thus, by Lemma \ref {lem:btl}, we have
\[
\sum_{t=1}^T f_t'(\hx_t) \le \min_{x \in \Fs} f'_{1:T}(x) \le f'_{1:T}(\xs)
\]
or equivalently,
\[
\sum_{t=1}^T f_t (\hx_t) + r_t(\hx_t)
  \leq r_{1:T}(\xs) + f_{1:T}(\xs) .
\]
Dropping the non-negative $r_t(\hx_t)$ terms on the left
hand side proves the lemma.
\end {proof}

By definition, the total loss of FTRL (which plays $\xt$) exceeds that
of BTRL (which plays $\hx_t = \xti$) by $\sum_{t=1}^T f_t (\xt) - f_t(\xti)$.
Putting these facts together proves Lemma \ref {lem:ftrl}.

\section{Proof of Lemma~\ref{lem:sum}}\label{ap:lem_sum_proof}
\begin {proof}
The lemma is clearly true for $n = 1$.  Fix some $n$, and assume the
lemma holds for $n - 1$.  Thus,
\begin {align*}
	\sum_{i=1}^n \frac { x_i } { \sqrt { \sum_{j=1}^i x_j } }
& \le  2 \sqrt { \sum_{i=1}^{n-1} x_i }
  + \frac { x_n } { \sqrt { \sum_{i=1}^n x_i } } \\
& = 2 \sqrt {Z - x} + \frac {x} { \sqrt { Z } }
\end {align*}
where we define $Z = \sum_{i=1}^{n} x_i$ and $x = x_n$.  The
derivative of the right hand side with respect to $x$ is $\frac {-1}
{\sqrt {Z -x}} + \frac {1} {\sqrt Z}$, which is negative for $x > 0$.
Thus, subject to the constraint $x \ge 0$,
the right hand side is maximized at $x = 0$, and is
therefore at most $2 \sqrt Z$.
\end {proof}
} %

\end{document}